\documentclass[journal]{IEEEtai}

\usepackage[colorlinks,urlcolor=blue,linkcolor=blue,citecolor=blue]{hyperref}

\usepackage{color,array}

\usepackage{cite}
\usepackage{amsmath,amssymb,amsfonts}
\usepackage{amsthm}
\newtheorem{assumption}{Assumption}
\newtheorem{theorem}{Theorem}
\newtheorem{lemma}{Lemma}
\newtheorem{remark}{Remark}
\usepackage{algorithm, algorithmic}
\usepackage{graphicx}
\usepackage{float}
\usepackage{subcaption}
\usepackage{textcomp}
\usepackage{xcolor}
\usepackage{booktabs}
\usepackage{multirow}
\usepackage[normalem]{ulem}
\usepackage[T1]{fontenc}
\useunder{\uline}{\ul}{}

\begin{document}

\title{FedGMI: Generative Model-Driven Federated Learning for Probabilistic Mixture Inference} 

\author{\IEEEauthorblockN{
            Qijun Hou\IEEEauthorrefmark{2}, Yuchen Shi\IEEEauthorrefmark{2}, Pingyi Fan\IEEEauthorrefmark{1}\IEEEauthorrefmark{2},~\IEEEmembership{Senior Member,~IEEE,} and Khaled B. Letaief\IEEEauthorrefmark{4},~\IEEEmembership{Fellow,~IEEE} }
        
	\IEEEauthorblockA{
            \IEEEauthorrefmark{2}Dept. of Electronic Engineering, BNRist, Tsinghua University.\\
            \IEEEauthorrefmark{4}Dept. of Electronic and Computer Engineering, HKUST.}
    
        \thanks{\IEEEauthorrefmark{1}Pingyi Fan is the corresponding author, e-mail: fpy@tsinghua.edu.cn. This work was supported by the National Key Research and Development Program of China (Grant NO.2021YFA1000500(4)).}
        \thanks{\IEEEauthorrefmark{4}Khaled B. Letaief was supported in part by Hong Kong Research Grants Council under the Areas of Excellence scheme grant AoE/E-601/22-R.}
        }

\maketitle

\begin{abstract} 
Federated Learning (FL) facilitates collaborative model training across decentralized clients while preserving data privacy by avoiding raw data exchange. Despite its potential, FL performance is often compromised by data heterogeneity across clients.
To address this, Clustered Federated Learning (CFL) groups clients with similar data distributions to improve model performance, but constrained by intra-cluster heterogeneity. Conversely, Personalized Federated Learning (PFL) tailors models to individual clients, but usually neglects the underlying structural similarities among clients. 
In this work, we investigate a probabilistic mixture (PM) scenario, where each client's local data distribution is modeled as a convex combination of several shared inherent distributions. 
To effectively model this structure, we propose FedGMI, a framework that utilizes Variational Autoencoders (VAEs) as generative density estimators to represent these inherent distributions and infer the mixture components of clients' local data distributions.
This approach enables structured personalization without sacrificing the benefits of collaborative learning.
Extensive experiments demonstrate that FedGMI effectively characterizes and discriminate the inherent distributions, as well as accurately estimates mixture proportions. Furthermore, FedGMI maintains robust performance even under communication cost constraints.
\end{abstract}

\begin{IEEEImpStatement}
Federated Learning (FL) enables collaborative training across clients, preserving data privacy. However, heterogeneous data distributions influence the model performance significantly. Existing methods, such as Clustered FL and Personalized FL, struggle with intra-cluster variability and fail to capture client similarities. FedGMI introduces a generative model-driven approach using Variational Autoencoders (VAEs) to represent inherent data distributions and estimate client-specific mixtures. This enables structured personalization while leveraging collaborative learning. FedGMI's effectiveness under low communication costs enhances scalability and deployability on resource-constrained edge devices. Applications span healthcare (diverse patient data across hospitals), smart devices (personalized user patterns), and finance (regional fraud detection and prevention).
\end{IEEEImpStatement}

\begin{IEEEkeywords}
Federated learning, Generative Artificial Intelligence, Variational Autoencoders, Probabilistic Mixture
\end{IEEEkeywords}

\section{Introduction}

\IEEEPARstart{F}{ederated Learning} (FL) is an emerging machine learning paradigm that facilitates collaborative intelligence across decentralized data sources while preserving data privacy and security. Unlike traditional machine learning approaches, which aggregate data in a centralized server, FL enables clients (mobile devices, edge servers, etc.) to collaboratively optimize a shared model without exposing their local data. 
Since FedAvg\cite{ref:fedavg} provided a functional framework of model aggregation, FL has garnered substantial attention. The rise of edge computing and the growing concerns about data privacy have further accelerated the adoption of FL in real-world scenarios.

Despite its potential, the practical use of FL is often limited by data heterogeneity, as the data distributions across clients are typically non-independent and identically distributed (non-IID). Non-IID data can lead to issues such as biased model updates, diminished generalization, and slower convergence. 
A single global model not only suffers from performance degradation in the presence of non-IID data, but also fails to capture the underlying structural relationships among clients. 
To address this, Clustered-FL (CFL) \cite{ref:cfl}\cite{ref:ifca} clusters homogeneous clients based on their data distributions and trains models for each cluster to make training more stable. Personalized-FL (PFL) \cite{ref:pfl-dkd} focuses on fine-tuning models for each individual client. 
However, there is a clear trade-off: While CFL focuses on common properties across clients within the same cluster, it may overlook the unique characteristics and preferences of individual clients within a cluster. Conversely, PFL focuses on single clients but often neglects the structural similarities or homogeneity across clients.

In essence, \textit{Clustering} and \textit{Personalization} represent two complementary perspectives on modeling the relationships among client distributions: the former exploits similarities, while the latter preserves individual differences. PFL can build on clustering to tailor models for client-specific model adaptation. A growing body of work further blurs the distinction between these two paradigms, leveraging probabilistic models to more precisely measure the relationships among client distributions.

Studies suggest that real-world data is likely to follow a mixture paradigm, where each client's local distribution is a combination of several shared inherent distributions \cite{ref:fedem}\cite{ref:fedce}\cite{ref:fedsoft}\cite{ref:fedgmm}. We refer to this data composition scenario as \textbf{Probabilistic Mixture} (\textbf{PM}), formally defined as Assumption.\ref{asp:pm}. A clear example can be found in federated image classification on handwritten digits (e.g., MNIST). The underlying inherent distributions could be distinct fundamental writing styles, such as 'slanted', 'neat', or 'broad'. The local data of a single client could be a mixture of these styles. Our work aims to leverage this mixture property to integrate the strengths of CFL and PFL. Through learning these shared inherent distributions and estimating each client's specific mixture proportions, we provides insights into both the structural similarities and the unique characteristics of each client. 

Conventional approaches for measuring inter-client similarity rely on fit the distributions of their data (original data or features) to parametric models (e.g., \textit{Gaussian}, \textit{Bernoulli}). However, explicitly modeling complex, high-dimensional modalities, such as images, remains intractable and limits the applicability of these approaches. To address this, generative models, such as Variational Autoencoders (VAEs) and Generative Adversarial Networks (GANs), have gained considerable attention in FL due to their capacity to represent complex data distributions. 
By leveraging their ability to estimate probability density and distribution divergence (e.g., \textit{KL-divergence}), generative models provide a more flexible way to analyze client data relationships. 

In this paper, we propose FedGMI, a mixture-aware Federated Learning (FL) framework designed to enhance model performance by identifying and adapting to the underlying data structures in the \textbf{PM} situation. FedGMI utilizes VAEs to represent heterogeneous data distributions and employs a modified Maximum A Posteriori (MAP) criterion to partition local data samples. 
Unlike standard FL, FedGMI jointly optimizes: (\textit{i}) VAE-based generative models representing inherent distributions, (\textit{ii}) Distribution-Expert Classifiers (e.g., CNNs) for each inherent distribution, and (\textit{iii}) estimation of each client's mixture proportions.
FedGMI is tested on multiple common datasets. The results validate that FedGMI effectively learns distinctive inherent distributions and estimates the mixture structure of client data distributions.

The remainder of this paper is organized as follows: Section \ref{sec:RW} provides a review of related works on generative models in Federated Learning, with emphasis on probabilistic mixture approaches. In Section \ref{sec:MT}, we present our proposed method, FedGMI, along with its corresponding algorithm, which leverages Variational Autoencoders (VAEs) to partition local data. Section \ref{sec:CA} provides the theoretical foundation by establishing the convergence properties of our method. In Section \ref{sec:EX}, we conduct experiments on the benchmark datasets to assess the effectiveness and performance of our approach. The paper concludes with a summary of findings and potential directions for future research.

\section{Related Work}
\label{sec:RW}
Since the concept of FedAvg was proposed in \cite{ref:fedavg}, FL has been studied as a framework of collaborative learning with privacy protection.
Extensive research has focused on the performance degradation of FL caused by data heterogeneity \cite{ref:avgconv}\cite{ref:flrl}.
While most FL frameworks are designed to be task-agnostic, their efficacy is typically demonstrated on supervised learning tasks, most notably image classification. 
To this end, various strategies have been proposed: VHFL\cite{ref:vhfl} proposed a framework to improve the performance by combining global observation with distributed data. ISFL\cite{ref:isfl} and SAM\cite{ref:sam} leverage sampling mechanisms to mitigate the impact of distribution shift. Architectural adjustments such as FedLP\cite{ref:fedlp} and FedNC\cite{ref:fednc} utilize pruning and network coding to enhance the communication efficiency in non-IID data settings. 
Beyond optimization, probabilistic perspectives have also gained traction; for instance, Louizos et al. \cite{ref:empers} explored the influence of parameter priors using a latent variable approach.

The quantified measurement of inter-client heterogeneity and similarity is pivotal to CFL\&PFL. Some researchers utilized geometric metrics in the model parameter space, such as the cosine similarity \cite{ref:cfl}. Building on this, Briggs et al. (2020) performed hierarchical clustering on clients based on the distance between their parameter gradients \cite{ref:hiercls}. Beyond parameter-based metrics, IFCA clusters clients using the loss function as a measure of heterogeneity\cite{ref:ifca}. Recently, probabilistic perspectives have gained prominence. Zhang et al. (2023) and Wu et al. (2024) utilized Bayesian methodology, which combines the features of CFL and PFL, to improve clustering in the FL framework \cite{ref:fedpsc}\cite{ref:fedbay}.

Previous research has studied the \textbf{PM} situation from several perspectives. Marfoq et al. (2022) presented FedEM, which treats each local data distributions as a mixture of latent inherent distributions, and theoretically validated the assumption\cite{ref:fedem}. While FedEM employs a regularized optimization objective to capture inter-client relationships, it has been noted for instability in proportion estimation.
FedSoft \cite{ref:fedsoft} and FedCE \cite{ref:fedce} enable soft clustering by incorporating mixture coefficients directly into the local training objective. FedCE also incorporated historical associations into the global aggregation. These frameworks allow clients to associate with multiple distributions, effectively balancing personalization with clustering. These works couple the estimation of proportions with the training classification tasks, and acquire classifiers for each inherent distributions respectively.
FedGMM models the local data distribution of each client with a Gaussian Mixture Model (GMM), leveraging EM algorithm to jointly optimize the parameters of the Gaussian components and their mixture weights\cite{ref:fedgmm}. However, the modeling capacity of GMM for high-dimensional data is limited. Although existing methods above succeeded in capturing the mixture structures, more explicit characterization of the inherent distributions, e.g., probability density estimation, still requires further research.

Generative models are widely used to enhance performance and tackle heterogeneity in FL. FedGAN extends the decentralized model aggregation method to generative models and proves its convergence and effectiveness\cite{ref:fedgan}. FedCG trains Conditional-GAN(C-GAN) for each client to learn the semantics of local data distributions, and the server aggregates the parameters of generators instead of CNN classifiers. The result shows the capability of generative models functioning as a semantic base and representing probabilistic distributions of data sources\cite{ref:fedcg}. FedVGL has clients who possess a generative model and classifier and use VAEs to improve the classification\cite{ref:fedvgl}. The superior performance of these methods --- especially under severe non-IID conditions --- highlights the potential of generative modeling within the FL context, providing a strong motivation for our VAE-driven approach.

\section{Methodology}
\label{sec:MT}
\begin{figure*}[ht]
    \centering
    \includegraphics[width=0.85\textwidth]{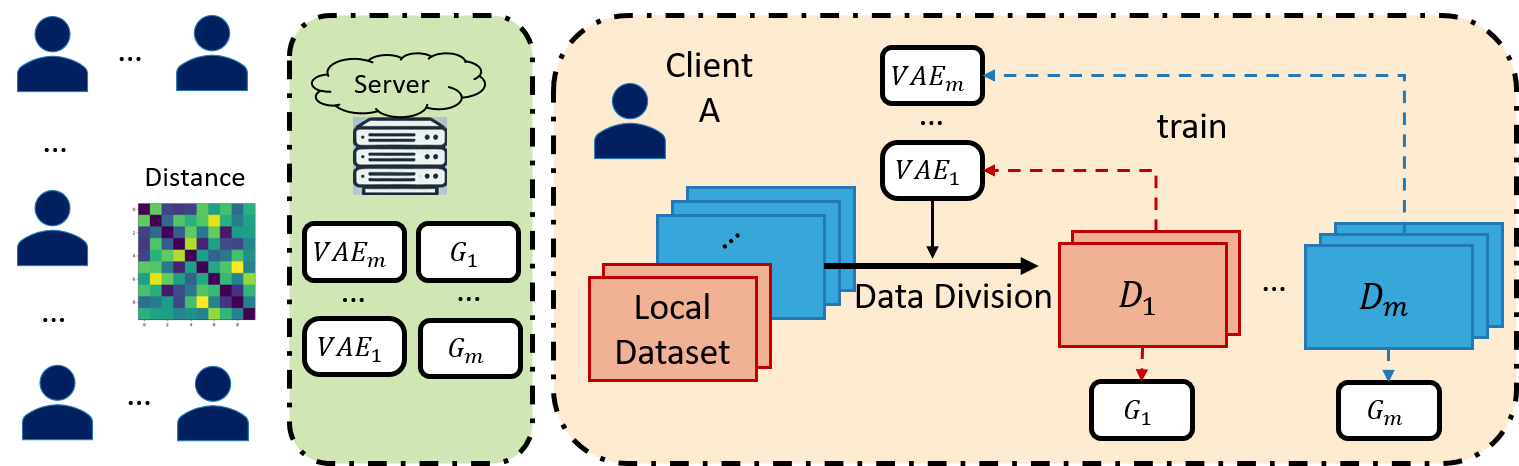}
    \caption{\textbf{Architectural overview of the FedGMI framework.} Local datasets are modeled as mixtures of inherent distributions (represented by distinct colors). Data division is supposed to partition the samples by their corresponding distribution. The subsets iteratively train the VAEs and the classifiers with their original loss functions. Server collects and aggregate the weights of both the VAEs and the classifiers. A distance metric is used to initialize.}
    \label{fig:system}
\end{figure*}
\subsection{Problem Definition}

Consider a federated setting with $N$ clients. Each client $C_i$ possesses a local dataset $D_i$, sampled from a local distribution $p_i$, $i = 0, 1, \dots, N-1$. We assume that each $D_i$ is a mixture of samples from $M$ shared inherent distributions, denoted as $\{q_j\}_{j=0}^{M-1}$. The \textbf{PM} assumption can be formally defined as:
\begin{assumption}
    \label{asp:pm}
    (Probabilistic Mixture) For $i = 0, 1, \dots, N-1$ and \(j = 0,1,\dots, M-1\), \(\exists \{\alpha_i^j\}\) that satisfy:
    \begin{equation}
        \begin{aligned}
            p_i &= \sum_{j=0}^{M-1} \alpha_i^j q_j \\
            \text{where}\quad \alpha_i^j &\geq 0 , \quad\sum_{j=0}^{M-1} \alpha_i^j = 1
        \end{aligned}
    \end{equation}
\end{assumption}

While the Federated Learning (FL) framework is task-agnostic, this paper considers the classification task as a representative example. We denote the loss function for a classification model $G$ on a data sample $(\mathbf{x}, y)$ as $L(\mathbf{x}, y; G)$.  A Variational Autoencoder (VAE), denoted as $\phi_{p}=\mathcal{D}_{p}\circ \mathcal{E}_{p}$, is used to characterize the distribution $p$, where $\mathcal{E}, \mathcal{D}$ represent the encoder and decoder.

To measure the extent to which a sample $(\mathbf{x}, y)$ correspond to a distribution $q$, we introduce an affinity score function $l(\mathbf{x}, y; q)$ satisfying:
(\textit{i}) \(\sum_{j=0}^{M-1}l(\mathbf{x}, y; q) = 1\);
(\textit{ii}) A higher value indicates a greater likelihood that $(\mathbf{x}, y)$ belongs to $q$. 
We will show in section \ref{sec:vae}, \ref{sec:div} that $l(\mathbf{x}, y; q)$ is analytically related to the VAE loss function $\mathcal{L}(\mathbf{x}, \phi_q)$. 

Thus, the purpose of FedGMI is to optimize the following objectives simultaneously over \(T\) communication rounds:

\paragraph{Distribution-Expert Classifiers for inherent distributions}: 
\begin{equation}
    \begin{aligned}
        \Bigl\{G_{q_j} = \operatorname*{arg\,min}_{G} \mathbb{E}_{(\mathbf{x}, y)\sim q_j}L(\mathbf{x}, y; G) \mid\\
        j = &0, \dots, M-1\Bigr\}
    \end{aligned}
\end{equation}
\paragraph{Probabilistic Mixture Inference for clients}:
\begin{equation}
    \begin{aligned}
        &\Bigl\{\alpha_i^j \mid \alpha_i^j \geq 0 , \sum_{j=0}^{M-1} \alpha_i^j = 1,\\
        &(\alpha_i^0, \dots, \alpha_i^{M-1}) = \operatorname*{arg\,min}_{\alpha^0, \dots, \alpha^{M-1}} \mathbb{E}_{(\mathbf{x}, y)\sim p_i} L(\mathbf{x}, y; G_{q_{\mathbf{x}, y}^*})\\
        &i = 0,\dots,N-1, j = 0, \dots, M-1 \Bigr\}
    \end{aligned}
\end{equation}

$\mathbb{E}$ denotes the expectation. \(q_{\mathbf{x}, y}^*\) denotes the inherent distribution with the highest affinity score for \((\mathbf{x}, y)\). The index $j \in \{0, \dots, M-1\}$ of \(q_{\mathbf{x}, y}^*\) is then denoted as $s^*_{\mathbf{x},y}$.

\subsection{VAEs as probability density estimators}
\label{sec:vae}
In this paper, we use VAEs to represent heterogeneous data distributions since the probability density can be approximately estimated by the VAE loss function. For a given set of input $\{\mathbf{x}\}$ sampled from distribution $p(\mathbf{x})$, the standard VAE optimization objective is formulated as:
\begin{equation}
    \min \mathbb{E}_{\mathbf{x} \sim p(\mathbf{x})}\left[-\log P_{\mathcal{D}}(\mathbf{x})\right]
\end{equation}

Optimizing this objective drives the decoder's generative distribution $P_{\mathcal{D}}(\mathbf{x})$ to approximate the data distribution $p(\mathbf{x})$, since the expected negative log-likelihood in the objective equals the KL-divergence between $P_{\mathcal{D}}(\mathbf{x})$ and $p(\mathbf{x})$.  
Following the derivation of the Evidence Lower Bound (ELBO)\cite{ref:elbo}, as shown in eq.\eqref{elbo:1}\eqref{elbo:2}, the negative log probability can be decomposed as the VAE loss function $\mathcal{L}(\mathbf{x};\phi_{p})$ minus a KL-divergence term which converges to a constant value as the training progresses. 
The VAE loss function $\mathcal{L}(\mathbf{x};\phi_{p})$ consists of the reconstruction error \(\mathcal{L}_{rec}\) and the KL-divergence error \(\mathcal{L}_{\text{KL}}\), as shown in eq.\eqref{elbo:2}. 
\begin{figure}[htbp]
    \centering
    \begin{align}
        -\log P_{\mathcal{D}}(x)
        &= \mathbb{E}\left[-\log \frac{P_{\mathcal{D}}(\mathbf{x}|z)P(z)}{P_{\mathcal{E}}(z|\mathbf{x})}\right] \nonumber\\[6pt]
        &\quad - D_{\text{KL}}(P_{\mathcal{E}}(z|\mathbf{x}) \| P_{\mathcal{D}}(z|\mathbf{x}))  \label{elbo:1}\\[6pt]
        &= \underbrace{\mathbb{E}\left[-\log P_{\mathcal{D}}(\mathbf{x}|z)\right]}_{\mathcal{L}_{rec}} + \underbrace{D_{\text{KL}}(P_{\mathcal{E}}(z|\mathbf{x}) \| P(z))}_{\mathcal{L}_{\text{KL}}} \nonumber \\[6pt]
        &\quad - \underbrace{D_{\text{KL}}(P_{\mathcal{E}}(z|\mathbf{x}) \| P_{\mathcal{D}}(z|\mathbf{x}))}_{\approx c}  \label{elbo:2}\\[3pt]
        &\approx \mathcal{L}_{rec} + \mathcal{L}_{\text{KL}} - c = \mathcal{L}(\mathbf{x};\phi_{p}) - c  
    \end{align}
\end{figure}

Motivated by the dual capabilities of VAEs, which act as both generative models that enable efficient sampling and as density estimators that can approximate log probabilities, we employ multiple VAEs to represent the local client distributions $\{p_i\}_{i=0}^{N-1}$ and the inherent distributions $\{q_j\}_{j=0}^{M-1}$ respectively.

\subsection{Local Data Division}
\label{sec:div}
In this paper, we determine which inherent distribution a given sample $x$ belongs to by matching the one with the highest affinity score. To construct a affinity score $l(\mathbf{x}; q)$ that is monotonically increasing with the fit between $\mathbf{x}$ and a distribution $q$, we adopt the Maximum A Posteriori (MAP) principle and define $l(\cdot)$ as the product of the likelihood and the prior, which is proportional to the posterior probability: 
\begin{equation}
    \label{def:l}
    \begin{aligned}
        l(\mathbf{x}, y; q_j) &= P(\mathbf{x}, y\sim q_j | \mathbf{x}, y) \\
        &\propto P(\mathbf{x}, y|\mathbf{x}, y\sim q_j)\cdot P(\mathbf{x}, y\sim q_j)
    \end{aligned}
\end{equation}

The prior probability $P(\mathbf{x}, y\sim q_j)$ for client $C_i$ is estimated as the proportion of the local data samples currently assigned to the inherent distribution $q_j$, noted as $\mathcal{S}(i; j) = \{\mathbf{x}, y \in D_i|s_{\mathbf{x}, y}^{*} = j\}$, relative to the total local dataset size $|D_i|$. 
These priors are initialized to a uniform distribution $\mathcal{U}$ and will be updated iteratively during training. The log likelihood probability $P(\mathbf{x}, y|\mathbf{x}, y \sim q_{j})$ is estimated by $-\mathcal{L}(\mathbf{x}, y;\phi_{q_j})$. 
Since what we are concerned with is the ordering (i.e., the ranking) rather than the absolute magnitudes, the softmax function can effectively transform these log values to a normalized probability vector while preserving the ordinal information. The empirical affinity score $l(\mathbf{x}, y; q_{j})$ can be calculated as the $j$\textsuperscript{th} entry of a vector with length $M$:
\begin{equation}
    \label{def:l2}
    \begin{aligned}
        &\mathbf{l}^{(t+1)}(\mathbf{x}, y)  \\ &= \text{softmax}\left(\left[-\mathcal{L}(\mathbf{x}, y;\phi^{(t)}_{q_j})\right]_{j=1}^{M}\right)\odot \left[\frac{|\mathcal{S}^{(t)}(i; j)|}{|D_i|}\right]_{j=1}^{M}
    \end{aligned}
\end{equation}
where $t$ denotes the communication round and $\odot$ represents the element-wise product. 

Notably, clients maintain this partitioning locally. This design eliminates the overhead of downloading model parameters for all inherent distributions in every round, except when the affinity scores require update. Such a decoupled architecture allows the communication cost to be flexibly adjusted by tuning the update frequency, making FedGMI highly adaptable to various communication constraints.

\subsection{FedGMI: Algorithm}
In this section, The FedGMI procedure is formally outlined in Algorithm \ref{alg:FedGMI}. Before the communication rounds start, each client prepares a locally trained VAE  $\phi_{p_i}$ and uploads it to the server. 
To ensure stable convergence, VAEs for inherent distributions $q_j$ are initialized in the server by a greedy algorithm (Algorithm \ref{alg:init}) rather than random initialization. This is critical because the randomly initialized VAEs usually give unstable data division. To intuitively explain this phenomenon, data from different clients have different distributions but also share common properties. VAEs that capture both the common properties and the different distributions give the ideal division in our considered situation. Randomly initialized VAEs may differ in their ability to extract common properties, which causes most samples to be assigned to the single subset represented by the VAE with the best ability to extract common properties, resulting in a trivial partitioning. This is also discussed in \cite{ref:fedsoft}. 
Mathematically, random initialization is unlikely to satisfy Assumption \ref{ast:delta}, which is essential for the convergence of the data division process as analyzed in Section \ref{sec:CA}.
To address this, Algorithm \ref{alg:init} greedily selects $M$ VAEs that maximize the pairwise distance. The distance between two VAEs $\phi_{q_i}$ and $\phi_{q_j}$ is measured by the empirical KL-divergence:
\begin{equation}
\label{eq:KL}
    \begin{aligned}
        D_{\text{KL}}(q_i\|q_j) &= \int_{\mathcal{X}} q_{i}(x)\log \frac{q_{i}(x)}{q_{j}(x)}dx \\
        &= -\int_{\mathcal{X}} q_{i}(x)\log q_{j}(x)dx + \int_{\mathcal{X}} q_{i}(x)\log q_{i}(x)dx\\
        &= \mathbb{E}_{x\sim q_i}\left[-\log q_{j}(x)\right] - \mathbb{E}_{x\sim q_i}\left[-\log q_{i}(x)\right]\\
        &\approx  \frac{1}{B}\sum_{b=1}^{B}\mathcal{L}(\hat{x}_b;\phi_{q_j})-\mathcal{L}(\hat{x}_b;\phi_{q_i})\\
        \overset{z_b\sim \mathcal{N}(0;I)}{=}& \frac{1}{B}\sum_{b=1}^{B}\mathcal{L}\left(\mathcal{D}_{q_i}(z_b);\phi_{q_j}\right)-\mathcal{L}\left(\mathcal{D}_{q_i}(z_b);\phi_{q_i}\right)
    \end{aligned}
\end{equation}
The local data division is initialized at the beginning and updates every $\tau$ rounds. During intermediate rounds, each user can adaptively participate in updating the VAEs, the distribution-expert classifiers, or both, depending on their local computational and communication resources. The server selects clients randomly and aggregate the updated model parameters based on weights $\mathbf{\beta}$ calculated by eq.\eqref{eq:beta}. 
\begin{equation}
\label{eq:beta}
    \beta_{i}^{j} = \frac{\alpha_{i}^{j}\cdot |D_i|}{\sum_{i}\alpha_{t}^{j}\cdot |D_i|}
\end{equation}

\begin{algorithm}
\caption{FedGMI}
\label{alg:FedGMI}
\renewcommand{\algorithmicrequire}{\textbf{Input:}}
\renewcommand{\algorithmicensure}{\textbf{Output:}}
    \begin{algorithmic}
        \REQUIRE Global epoch $T$, division update interval $\tau$, number of clients $n$, client selection size $K$;
        \ENSURE Inherent VAEs $\phi_{q_1}, \dots \phi_{q_m}$, classifiers $G_{q_1}, \dots G_{q_m}$, ratio coefficients $\{\alpha\}_{N\times M}$;
        \vspace{0.2cm}
        
        \FORALL {t = 0, \dots, T-1}
            \IF{$t=0$} 
                \STATE All clients send local VAE $\{\phi_{p_i}\}_{i=1}^{n}$ to server
                \STATE $\phi_{q_1}, \dots \phi_{q_m} \leftarrow \text{\textit{Stable-Initialize}}\left(\{\phi_{p_i}\}_{i=1}^{n}\right)$
            \ENDIF
            \IF{$t\quad mod\quad  \tau = 0$} 
                \STATE Server send inherent VAEs $\phi_{q_1}, \dots \phi_{q_m}$ to all clients
                \STATE All clients update local division by \eqref{def:l}\eqref{def:l2}
            \ENDIF
            \STATE Server calculate $\beta_i^j$ based on \eqref{eq:beta}
            \STATE Server select a set of $K$ clients randomly
            \STATE Selected clients download $\{\phi_{q_j}\}$ and classifier $\{G_{q_j}\}$ and optimize the parameters by gradient descent method\\
            $\phi_{k}^{*} \leftarrow \operatorname*{arg\,min}_{\mathbf{x}\in \mathcal{S}(D_i; q_s)} \mathcal{L}(\mathbf{x};\phi)$\\
            $G_{k}^{*} \leftarrow \operatorname*{arg\,min}_{\mathbf{x}\in \mathcal{S}(D_i; q_s)} L(G(\mathbf{x}),y)$
            \STATE Client upload the updated parameters and Server aggregate the parameter
            \begin{align*}
                &\phi_{q_j}^{(t+1)}, G_{q_j}^{(t+1)} \leftarrow \\  &\sum_{k\in selected}\frac{\beta_k}{\sum_k\beta_k} \phi_{k}^{*}, \sum_{k\in selected}\frac{\beta_k}{\sum_k\beta_k} G_{k}^{*}
            \end{align*}
        \ENDFOR
    \end{algorithmic}
\end{algorithm}

\begin{algorithm}
\caption{Stable-Initialize}
\label{alg:init}
\renewcommand{\algorithmicrequire}{\textbf{Input:}}
\renewcommand{\algorithmicensure}{\textbf{Output:}}
    \begin{algorithmic}
        \REQUIRE $m$, VAEs $\phi_{1}, \dots \phi_{s}$ ($s\geq m \geq2$)
        \ENSURE initial VAE parameters $\phi_{(1)}, \dots \phi_{(m)}$
        \vspace{0.2cm}
        \STATE Initialize parameter set $\mathcal{I} = \{\}$\\
        Calculate the KL-divergence Matrix $\{D_{ij}\} = D_{\text{KL}}(q_i\|q_j)$
        \WHILE {$|\mathcal{I}| < m$}
            \STATE Add $p, q = \operatorname*{arg\,max} D_{ij}$ into $\mathcal{I}$ if $\mathcal{I}$ is empty
            \STATE Or add $p = \operatorname*{arg\,max}_{i \notin \mathcal{I}} \min_{j\in \mathcal{I}} D_{ij}$ if $\mathcal{I}$ is not empty
        \ENDWHILE
        \STATE Return parameters of $\phi_{p}$ for p in $\mathcal{I}$
    \end{algorithmic}
\end{algorithm}

\section{Convergence Analysis}
\label{sec:CA}
This section analyzes the convergence properties of the proposed FedGMI algorithm. The proof framework follows the approaches in \cite{ref:ifca} and \cite{ref:avgconv}. FedGMI consists of two fundamental operations: \textit{Data Division} and \textit{Federated Training}. 
The convergence of \textit{Data Division} is demonstrated in this section. 
\textit{Data Division} involves sample assignments based on eq.\eqref{def:l}\eqref{def:l2}, while \textit{Federated Training} involves training classification models on the partitioned data. 
Subsequently, in the \textit{Federated Training} stage, each classification model for an inherent distribution can be considered a \textit{near-IID} FedAvg process, as the data has been divided into identified distributions, which limits the extent of \textit{non-iid}. $\theta, \omega$ represents the parameter vector of VAEs and classification networks. 

To establish the convergence of \textit{Data Division}, we assume that our adopted loss functions satisfy Assumption \ref{ast:lip}, \ref{ast:convex}, which is common in FL analysis. Then we make Assumption \ref{ast:delta}, revealing the necessity of Stable-Initialization, similarly to \cite{ref:ifca}. Without this initialization constraint, it is hard to obtain a useful upper bound of the error rate of data division in Lemma \ref{lemma:Perr}. We also assume that the variance of the empirical loss function for any sample $x$ and its gradient is bounded by constants $\eta^2, \nu^2$. Here the extra Assumption \ref{ast:smooth} is used to bound the deviation of estimated combination coefficients within a client by a constant $s$ to avoid extreme division, which can be easily controlled in practice by adding a smoother while conducting data-division as in \cite{ref:fedsoft}. Lemma \ref{lemma:Perr} shows that the error rate of data division for any sample is bounded. It's proof is a modification of \cite[Lemma 3]{ref:ifca}. Following the same way as in \cite[Theorem 2]{ref:ifca}, we prove the convergence of the \textit{Data Division} stage.
\begin{assumption}
\label{ast:lip}
(Lipschitz-smoothness) $\forall \theta, \theta', f(\theta)\leq f(\theta') + \langle\nabla f(\theta'), \theta'-\theta\rangle + \frac{L}{2}\|\theta' - \theta\|^2$
\end{assumption}
\begin{assumption}
\label{ast:convex}
(Strong-Convexity) $\forall \theta, \theta', f(\theta)\geq f(\theta') + \langle\nabla f(\theta'), \theta'-\theta\rangle + \frac{\mu}{2}\|\theta' - \theta\|^2$
\end{assumption}
\begin{assumption}
\label{ast:delta}
(Stable-Initialize) $\theta_j^* = \operatorname*{arg\,min}_{\theta}{l(q_j;\theta)}$ is the theoretically optimum parameter for inherent distribution $q_j$, and $\Delta = \min_{j\neq j'}\|\theta_{j}^* - \theta_{j'}^*\|$, Assume that the initial VAE parameter for $q_j$ satisfies $\|\theta_{j}^{(0)}-\theta_j^*\| \leq (\frac{1}{2}-\alpha)\sqrt{\mu/L}\Delta$
\end{assumption}
\begin{assumption}
\label{ast:smooth}
$\forall i=0,\dots,n-1, j=0,\dots,m-1, j'\neq j$ The estimated prior probability $\alpha_i^j / \alpha_i^{j'} <= (1-s)/s$
\end{assumption}
\begin{lemma}
\label{lemma:Perr}
$\forall x\in D_i$, probability of $x$ incorrectly assigned to other distributions has upper bound:
\begin{equation}
    {P_{\text{err}}(x) \leq \frac{4m\eta^2}{s^2\alpha^2\mu^2\Delta^4}}
\end{equation}
\end{lemma}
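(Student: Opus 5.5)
Following the proof of \cite[Lemma 3]{ref:ifca}, fix a sample $x\in D_i$ whose true component is $q_j$. An error means that the maximizer of $\mathbf{l}(x)$ in eq.\eqref{def:l2} is some $j'\neq j$. Taking logarithms in eq.\eqref{def:l2}, the assignment to $j'$ requires
\begin{equation*}
-\mathcal{L}(x;\theta_{j'}) + \log\alpha_i^{j'} \;\geq\; -\mathcal{L}(x;\theta_{j}) + \log\alpha_i^{j},
\end{equation*}
where $\theta_j,\theta_{j'}$ are the current VAE parameters. By Assumption \ref{ast:smooth}, the prior ratio changes the log-affinity gap by at most a bounded amount governed by $s$. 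The core task is therefore to show that, in expectation over $x\sim q_j$, the loss gap $\mathcal{L}(x;\theta_{j'})-\mathcal{L}(x;\theta_j)$ is separated from zero by a margin of order $s\alpha\mu\Delta^2$. Once that is done, a Chebyshev inequality with the variance bound $\eta^2$ turns the margin into the stated probability bound.

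\textbf{Step 1: the separation margin.} Define the population loss $F_j(\theta)=\mathbb{E}_{x\sim q_j}\mathcal{L}(x;\theta)$, which is minimized at $\theta_j^*$.
\begin{itemize}
\item Assumption \ref{ast:lip} gives $F_j(\theta_j)-F_j(\theta_j^*)\le \frac{L}{2}\|\theta_j-\theta_j^*\|^2 \le \frac{\mu}{2}(\frac12-\alpha)^2\Delta^2$, using Assumption \ref{ast:delta}.
\item By the triangle inequality and Assumption \ref{ast:delta}, $\|\theta_{j'}-\theta_j^*\|\ge \Delta-(\frac12-\alpha)\sqrt{\mu/L}\,\Delta \ge (\frac12+\alpha)\Delta$, since $\mu\le L$.
\item Assumption \ref{ast:convex} then gives $F_j(\theta_{j'})-F_j(\theta_j^*)\ge \frac{\mu}{2}(\frac12+\alpha)^2\Delta^2$.
\end{itemize}
Subtracting, $F_j(\theta_{j'})-F_j(\theta_j)\ge \frac{\mu}{2}\big[(\tfrac12+\alpha)^2-(\tfrac12-\alpha)^2\big]\Delta^2=\alpha\mu\Delta^2$.

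\textbf{Step 2: absorbing the prior.} The log-prior term must not consume this margin. My plan is to use the smoothed prior of Assumption \ref{ast:smooth} so that the effective margin becomes a constant multiple $s\alpha\mu\Delta^2/2$. This amounts to working with the softmax-product form in eq.\eqref{def:l2} rather than with raw logs, and bounding the perturbation multiplicatively by $s$. This is the step I expect to be the main obstacle. The statement's dependence on $s^2$ suggests that the intended argument treats the prior as scaling the margin, not shifting it. I would first try to write the error event as $\mathcal{L}(x;\theta_{j'})-\mathcal{L}(x;\theta_j) \le F_j(\theta_{j'})-F_j(\theta_j) - s\alpha\mu\Delta^2/2$ for the sample-level deviation. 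If needed, I would impose the smoothing condition so that $\log((1-s)/s)$ is dominated by $(1-s)\alpha\mu\Delta^2$.

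\textbf{Step 3: concentration and union bound.} The event in Step 2 requires that either $|\mathcal{L}(x;\theta_j)-F_j(\theta_j)|$ or $|\mathcal{L}(x;\theta_{j'})-F_j(\theta_{j'})|$ exceeds $s\alpha\mu\Delta^2/4$. The variance of each is bounded by $\eta^2$. Chebyshev gives each probability at most $\frac{16\eta^2}{s^2\alpha^2\mu^2\Delta^4}$; with a tighter split of the threshold the constant becomes $4$. A union bound over the $m-1<m$ competing components $j'$ then gives
\begin{equation*}
P_{\text{err}}(x)\le \frac{4m\eta^2}{s^2\alpha^2\mu^2\Delta^4}.
\end{equation*}
The constant $4$ versus $16$ depends on how the margin is split between the two deviations. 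I would match it by treating the difference as a single random variable with variance at most $\eta^2$ and using the threshold $s\alpha\mu\Delta^2/2$.
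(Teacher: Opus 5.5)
Your Step~1 (the mean gap $F_j(\theta_{j'})-F_j(\theta_j)\ge\alpha\mu\Delta^2$) and your Chebyshev-plus-union-bound finish match the paper. The genuine gap is Step~2. You never obtain the factor $s$ from the lemma's hypotheses, and with the error event you wrote down it cannot be obtained.

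Under eq.~\eqref{def:l2}, misassignment to $j'$ is exactly $\mathcal{L}(x;\theta_{j'})-\mathcal{L}(x;\theta_j)\le\log(\alpha_i^{j'}/\alpha_i^{j})$. Working with the softmax-product form instead does not help, because the softmax normaliser is common to all components. Assumption~\ref{ast:smooth} only bounds the right-hand side by the additive constant $\log((1-s)/s)$. This constant is strictly positive whenever the priors may differ: applying the assumption in both directions forces $s\le 1/2$, with equality only if all priors coincide. Nothing in the hypotheses ties it to $\alpha\mu\Delta^2$.

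In this model the stated bound is actually false without more. Let $\eta\to0$ while the log-prior ratio exceeds the mean gap, which Assumption~\ref{ast:smooth} permits whenever $\log((1-s)/s)$ exceeds it. Misassignment then becomes almost sure, while $4m\eta^2/(s^2\alpha^2\mu^2\Delta^4)\to0$. So your fallback condition $\log((1-s)/s)\le(1-s)\alpha\mu\Delta^2$ is an added hypothesis, not a proof step.

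The paper gets $s$ by a different device. It writes the error event with the priors multiplying the losses, $\alpha^j\mathcal{L}(x;\theta_j^{(t)})\le\alpha^0\mathcal{L}(x;\theta_0^{(t)})$. It then uses $P(a>b)\le P(a>t)+P(b<t)$ with a threshold for which the two deviations must exceed the fractions $\frac{\alpha^j}{\alpha^0+\alpha^j}$ and $\frac{\alpha^0}{\alpha^0+\alpha^j}$ of the gap $\mathcal{L}^0(\theta_j^{(t)})-\mathcal{L}^0(\theta_0^{(t)})$. Assumption~\ref{ast:smooth} makes each fraction at least $s$, so each deviation has margin at least $s\alpha\mu\Delta^2$, and Chebyshev is applied to each separately. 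Effectively the priors only decide how the gap is split; they never shift it.

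That is the idea you would need to reproduce the paper's argument, but it is not faithful to eq.~\eqref{def:l2}. The equality displayed right after the choice of $t$ actually corresponds to comparing the \emph{unweighted} losses with the threshold $\frac{\alpha^0\mathcal{L}^0(\theta_0^{(t)})+\alpha^j\mathcal{L}^0(\theta_j^{(t)})}{\alpha^0+\alpha^j}$. Your suspicion that Assumption~\ref{ast:smooth} cannot turn an additive log-prior into a multiplicative $s$ is therefore correct. A rigorous version of your route must state the extra condition explicitly.

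A smaller point concerns Step~3. Your claim that $\mathcal{L}(x;\theta_{j'})-\mathcal{L}(x;\theta_j)$ has variance at most $\eta^2$ is unjustified; in general only $(\eta+\eta)^2=4\eta^2$ holds. On the other hand, your own condition leaves a residual margin $\alpha\mu\Delta^2-(1-s)\alpha\mu\Delta^2=s\alpha\mu\Delta^2$, not $s\alpha\mu\Delta^2/2$. With both corrections, Chebyshev gives $4\eta^2/(s^2\alpha^2\mu^2\Delta^4)$ per competitor. The union bound then gives $4(m-1)\eta^2/(s^2\alpha^2\mu^2\Delta^4)\le 4m\eta^2/(s^2\alpha^2\mu^2\Delta^4)$. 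So under the added hypothesis the stated constant does come out, but by this accounting rather than the one you sketch.
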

\begin{proof}
Without loss of generality, we suppose sample $x$ from client $i$ belongs to $q_0$, and discuss the probability of $x$ assigned to $q_j$, denoted $P_{i}^{j}$. $\mathcal{L}^{0}(\theta)$ denotes the average loss of $x\sim q_0$ calculated by parameter $\theta$
\begin{equation}
    P_{i}^{j} = P\left\{l(x, j) \leq l(x, 0)\right\} = P\left\{\alpha^j\mathcal{L}(x;\theta_j^{(t)})\leq\alpha^0\mathcal{L}(x;\theta_0^{(t)})\right\}
\end{equation}
Using the probability inequality $P(a>b) \leq P(a>t)+P(b<t), \forall t$ and let $t = \frac{\alpha^0}{\alpha^0+\alpha^j}\left(\mathcal{L}^{0}(\theta_0^{(t)}) +\mathcal{L}^{0}(\theta_j^{(t)})\right)$. Then we have
\begin{equation}
    \begin{aligned}
        &P\left\{\alpha^0\mathcal{L}(x;\theta_0^{(t)})\geq t\right\} \\
        = &P\left\{\alpha^0\mathcal{L}(x;\theta_0^{(t)})\geq \frac{\alpha^0}{\alpha^0+\alpha^j}\left(\mathcal{L}^{0}(\theta_0^{(t)}) +\mathcal{L}^{0}(\theta_j^{(t)})\right)\right\}\\
        = &P\left\{\mathcal{L}(x;\theta_0^{(t)}) - \mathcal{L}^{0}(\theta_0^{(t)}) \geq \frac{\alpha^j}{\alpha^0+\alpha^j}\left(\mathcal{L}^{0}(\theta_j^{(t)})-\mathcal{L}^{0}(\theta_0^{(t)})\right)\right\} \\
        \leq &P\left\{\mathcal{L}(x;\theta_0^{(t)}) - \mathcal{L}^{0}(\theta_0^{(t)}) \geq s\left(\mathcal{L}^{0}(\theta_j^{(t)})-\mathcal{L}^{0}(\theta_0^{(t)})\right)\right\}
    \end{aligned}
\end{equation}

Based on Assumption\ref{ast:convex}, \ref{ast:delta} we have
\begin{equation}
    \begin{aligned}
        \mathcal{L}^{0}(\theta_j^{(t)}) &\geq \mathcal{L}^{0}(\theta_0^*) + \frac{\mu}{2}\|\theta_j^{(t)} - \theta_0^*\|^2 \\
        &\geq \mathcal{L}^{0}(\theta_0^*) + \frac{\mu}{2}\left(\|\theta_j^* - \theta_0^*\| - \|\theta_j^{(t)} - \theta_j^*\|\right)^2 \\
        &\geq \mathcal{L}^{0}(\theta_0^*) + \frac{\mu}{2}\left(\frac{1}{2}+\alpha\right)^2\Delta^2
    \end{aligned}
\end{equation}

Similarly, according to Assumption\ref{ast:lip}, \ref{ast:delta}
\begin{equation}
    \begin{aligned}
        \mathcal{L}^{0}(\theta_0^{(t)}) &\leq \mathcal{L}^{0}(\theta_0^*) + \frac{L}{2}\|\theta_0^{(t)} - \theta_0^*\|^2 \\
        &\leq \mathcal{L}^{0}(\theta_0^*) + \frac{L}{2}\left(\frac{1}{2}-\alpha\right)^2\frac{\mu}{L}\Delta^2 \\
        &= \mathcal{L}^{0}(\theta_0^*) + \frac{\mu}{2}\left(\frac{1}{2}-\alpha\right)^2\Delta^2
    \end{aligned}
\end{equation}

Then we get $\mathcal{L}^{0}(\theta_j^{(t)}) - \mathcal{L}^{0}(\theta_0^{(t)}) \geq \alpha\mu\Delta^2$. With Chebyshev's inequality, we have $P\left\{\alpha^0\mathcal{L}(x;\theta_0^{(t)})\geq t\right\} \leq \frac{4\eta^2}{s^2\alpha^2\mu^2\Delta^4}$ and similarly $P\left\{\alpha^0\mathcal{L}(x;\theta_j^{(t)})\leq t\right\} \leq \frac{4\eta^2}{s^2\alpha^2\mu^2\Delta^4}$. Using the probability inequality $P(a>b) \leq P(a>t)+P(b<t), \forall t$ we get an upper bound for $P_{i}^{j}$. Then we complete the proof according to $P_{\text{err}}(x) \leq \sum_{j\neq 0}P_i^j$.
\end{proof}
\begin{theorem}
(Convergence of Data Division) $\forall \delta\in(0,1)$,
\begin{equation}
\label{eq:conv}
    \begin{aligned}
        \|\theta_j^{(t+1)} - \theta_j^{(t)}\| \leq (1-\frac{p\mu}{8L})\|\theta_j^{(t)} - \theta_j^{*}\| + \frac{C_0}{\delta}\\ + \frac{C_1}{\delta^{3/2}} + C_2\frac{m\eta^2}{\delta s^2\alpha^2\mu^2\Delta^4}
    \end{aligned}
\end{equation}
holds with probability at least $1-\delta$, where $C_0, C_1$ are with respect to $m, |D_i|, n, \nu, \eta, \Delta, \alpha, \mu, L$\footnote{The form of $C_0, C_1$ can be seen in \cite[Theorem 2]{ref:ifca}, with $n'$ / total client number replaced by 1 / total data size}.
\end{theorem}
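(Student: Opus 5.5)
We follow the template of \cite[Theorem 2]{ref:ifca}: view one round of FedGMI for a fixed inherent distribution $q_j$ as a gradient step on a loss built from the samples the \textit{Data Division} stage assigned to $q_j$. We split the deviation from an ideal gradient step into a contraction term, a statistical fluctuation term, and a misassignment term controlled by Lemma~\ref{lemma:Perr}. We argue inductively over rounds, so that if $\theta_j^{(t)}$ satisfies the ball condition of Assumption~\ref{ast:delta}, so does $\theta_j^{(t+1)}$. This keeps Lemma~\ref{lemma:Perr} valid at every round, because its proof only uses $\|\theta_j^{(t)}-\theta_j^*\|\le(\frac12-\alpha)\sqrt{\mu/L}\,\Delta$.

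\textbf{Step 1 (decomposition).} Write $\theta_j^{(t+1)}=\theta_j^{(t)}-\gamma\hat g_j$. Here $\hat g_j$ is the $\beta$-weighted aggregate gradient over the samples in $\bigcup_i\mathcal S^{(t)}(i;j)$, taken over the $K$ selected clients, which contribute a fraction $p$ of the data. Decompose $\hat g_j=g_j^{\mathrm{true}}+(\hat g_j^{\mathrm{clean}}-g_j^{\mathrm{true}})+(\hat g_j-\hat g_j^{\mathrm{clean}})$:
\begin{itemize}
\item $g_j^{\mathrm{true}}=p\nabla\mathcal L^{j}(\theta_j^{(t)})$ is the population gradient;
\item $\hat g_j^{\mathrm{clean}}$ is the empirical gradient over the samples truly drawn from $q_j$;
\item the last term collects the misassigned samples.
\end{itemize}
With step size $\gamma=1/L$, Assumptions~\ref{ast:lip} and~\ref{ast:convex} give the standard contraction
\[
\|\theta_j^{(t)}-\gamma g_j^{\mathrm{true}}-\theta_j^*\|\le\bigl(1-\tfrac{p\mu}{8L}\bigr)\|\theta_j^{(t)}-\theta_j^*\|.
\]
The constant $8$ absorbs the reduced effective sample fraction and the $\beta$-normalization, as in IFCA.

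\textbf{Step 2 (statistical terms).} The clean empirical gradient has variance at most $\nu^2$ divided by the total data size. Chebyshev's inequality, applied with failure probability of order $\delta$, therefore gives a deviation of order $C_0/\delta$. The random size of the assigned set $|\mathcal S^{(t)}(i;j)|$ also fluctuates, and its concentration multiplies with the gradient-norm fluctuation. Following IFCA, this product produces the $C_1/\delta^{3/2}$ term. Both constants are taken from \cite[Theorem 2]{ref:ifca} with the substitution in the footnote: the per-client count becomes $1$ and the client total becomes the total data size.

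\textbf{Step 3 (misassignment term; main obstacle).} By Lemma~\ref{lemma:Perr}, each sample is wrongly assigned with probability at most $P_{\text{err}}\le\frac{4m\eta^2}{s^2\alpha^2\mu^2\Delta^4}$. The expected fraction of contaminated samples in $\mathcal S^{(t)}(i;j)$ is therefore bounded by $P_{\text{err}}$. Markov's inequality then bounds the realized fraction by $P_{\text{err}}/\delta$ with probability at least $1-\delta/3$.

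Each contaminating sample distorts the gradient by a bounded amount. This holds because all parameters lie in the initialization balls, so $L$-smoothness bounds the gradient norm by a multiple of $L\Delta$. Absorbing these factors into $C_2$ gives the last term, $C_2\frac{m\eta^2}{\delta s^2\alpha^2\mu^2\Delta^4}$.

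The delicate point is that the assignments at round $t$ depend on $\theta^{(t)}$, which depends on the same data. As in IFCA, I would handle this by conditioning on $\theta^{(t)}$, or by assuming fresh samples per update. Lemma~\ref{lemma:Perr} is then applied conditionally. Assumption~\ref{ast:smooth} is used inside the lemma to keep the priors from collapsing the division.

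A union bound over the three events gives overall probability at least $1-\delta$. The inequality is stated for $\|\theta_j^{(t+1)}-\theta_j^*\|$, and the left side of \eqref{eq:conv} should be read that way. When the right side stays below the Assumption~\ref{ast:delta} radius, the induction hypothesis is preserved for the next round.
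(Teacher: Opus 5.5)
Your route is the same as the paper's. Its proof is only an appeal to \cite[Theorem 2]{ref:ifca}, with Lemma~\ref{lemma:Perr} replacing IFCA's per-machine misclustering bound ($n'\to 1$), and reading the left side of \eqref{eq:conv} as $\|\theta_j^{(t+1)}-\theta_j^*\|$ is sensible. However, your treatment of the dependence between assignment and gradient does not work. Because FedGMI assigns individual samples, IFCA's resampling device (decide membership on one batch, take the gradient on an independent batch) has no analogue here. The event $\{x\in\mathcal{S}^{(t)}(i;j)\}$ is a function of $\mathcal{L}(x;\theta^{(t)})$ for the very sample whose gradient enters $\hat g_j$. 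Conditioning on $\theta^{(t)}$, or using fresh data each round, removes only the cross-round dependence. Hence the $q_j$-samples that actually land in $\bigcup_i\mathcal{S}^{(t)}(i;j)$ are not an unbiased sample of $\nabla\mathcal{L}^j(\theta_j^{(t)})$, as your Step 2 assumes. Relative to the full $q_j$-sample they miss $\mathbb{E}\bigl[(\nabla\mathcal{L}(x;\theta_j^{(t)})-\nabla\mathcal{L}^j(\theta_j^{(t)}))\mathbf{1}\{x\notin\mathcal{S}^{(t)}(i;j)\}\bigr]$, which Cauchy--Schwarz bounds only by $\nu\sqrt{P_{\text{err}}}$. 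The contaminating samples carry a bias of the same kind. This term does not shrink with the data size and is not of the form $P_{\text{err}}/\delta$. It must therefore be carried as an extra term, or excluded by an explicit assumption that a sample's assignment is independent of its gradient noise. The paper's citation of IFCA glosses over the same point.

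Separately, your claim that a contaminating sample's gradient is bounded by a multiple of $L\Delta$ is false. For $x\sim q_{j'}$ routed to $j$, smoothness gives $\|\nabla\mathcal{L}^{j'}(\theta_j^{(t)})\|\le L\bigl(\tfrac{\Delta}{2}+\|\theta_j^*-\theta_{j'}^*\|\bigr)$. Since $\Delta$ is the \emph{minimum} separation, nothing bounds $\|\theta_j^*-\theta_{j'}^*\|$ in terms of $\Delta$, and lying in the ball around $\theta_j^*$ does not help. Moreover, individual gradients are only variance-bounded by $\nu^2$, not bounded. The fix is to run the proof of Lemma~\ref{lemma:Perr} pairwise. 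With $D_{jj'}=\|\theta_j^*-\theta_{j'}^*\|\ge\Delta$, the same two estimates give a loss gap of at least $\alpha\mu D_{jj'}^2$. So the probability that a $q_{j'}$-sample is assigned to $j$ is $\lesssim\eta^2/(s^2\alpha^2\mu^2D_{jj'}^4)$. Multiplying by $\|\nabla\mathcal{L}^{j'}(\theta_j^{(t)})\|\le\tfrac32 LD_{jj'}$ gives $\lesssim L\eta^2/(s^2\alpha^2\mu^2\Delta^3)$. After the step $\gamma=1/L$, summing over $j'$ and applying Markov, this is exactly the last term of \eqref{eq:conv} with $C_2\propto\Delta$. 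The zero-mean noise of the contaminating samples still needs its own Chebyshev-type bound rather than a deterministic per-sample bound.
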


After the convergence of \textit{Data Division}, the result of the division can be seen as a constraint to the heterogeneity among local empirical distributions represented by the subsets. Here as discussed in \cite{ref:avgconv}, constant $\Gamma = L^* - \sum_{i=0}^{n-1}\alpha_{i}L_{i}^*$ is the measurement of heterogeneity. Here we use the convergence above to bound $\Gamma$. After $T$ rounds, where $T$ is large enough, $\|\theta_j^{(T)}-\theta_j^*\|$ can be ignored according to the convergence, then we have inequality \eqref{eq:gamma}, where we partition the local distribution into correctly-assigned samples(\textit{corr}) and error samples(\textit{err}), and then the average on \textit{corr} sample equals the average on $q_j$. Using Lemma\ref{lemma:Perr} and assuming that all inherent distributions has uniform prior probability and the expectation of $L(\cdot)$ is bounded by $\bar{\eta}$, we obtain an upper-bound of $\Gamma_j$, which ensures the convergence of \textit{Federated Training} according to \cite[Theorem 1]{ref:avgconv}.
\begin{figure}[htbp]
    \centering
    \begin{equation}
    \label{eq:gamma}
        \begin{aligned}
            \Gamma_j &= L^* - \sum_{i=0}^{n-1}\alpha_{i}L_{i}^* = \sum_{i=0}^{n-1}\alpha_{i}\left(L^*-L_i^*\right) \\
            &= \sum_{i=0}^{n-1}\frac{|\mathcal{S}_i^j|}{\sum_{i}|\mathcal{S}_i^j|} \min \mathbb{E}_{q_j}[L(x;\omega_j)]\\
            &\quad - \min\left(\mathbb{E}_{corr}^i[L(x;\omega_j)]+\mathbb{E}_{err}^i[L(x;\omega_j)]\right)\\
            &\leq \sum_{i=0}^{n-1}\frac{|\mathcal{S}_i^j|}{\sum_{i}|\mathcal{S}_i^j|} \min \mathbb{E}_{q_j}[L(x;\omega_j)] - \min \mathbb{E}_{corr}^i[L(x;\omega_j)]\\
            &\quad - \min \mathbb{E}_{err}^i[L(x;\omega_j)]\\
            &= \frac{1}{\sum_{i}|\mathcal{S}_i^j|}\sum_{i=0}^{n-1}\mathbb{E}[n_{err}]\left(\mathbb{E}_{q_j}[L(x;\omega_j)] - \mathbb{E}_{\bar{q_j}}[L(x;\omega_j)]\right)\\
            &\leq c_1\frac{m\eta^2}{s^2\alpha^2\mu^2\Delta^4}2\bar{\eta}
        \end{aligned}
    \end{equation}
\end{figure}

Combining the conclusion of \textit{Data Division} and \textit{Federated Training}, the global convergence of FedGMI is established.

\begin{remark}
    \label{rmk:M}
    The theoretical analysis above reveals that the stability of local data partitioning is critically dependent on \(\Delta = \min_{j\neq j'}\|\theta_{j}^* - \theta_{j'}^*\|\), which represents the discriminability of the inherent distributions. As \(M\) increases, the optimum parameters for each inherent distribution tend to be more congested in the parameter space. This leads to a reduction in $\Delta$, thereby increasing the partitioning error rate $P_{err}$ and potentially compromising the stability of the mixture inference.
\end{remark}

\section{Experimental Analysis}
\label{sec:EX}
\subsection{Settings}
The proposed framework, FedGMI, is evaluated on two standard image classification benchmarks: MNIST and CIFAR-10. To simulate heterogeneous client environments, we generate local datasets following the Probabilistic Mixture (PM) patterns described in \cite{ref:fedsoft, ref:fedce}.
For both datasets, the inherent distributions that form the \textbf{PM} components are generated via image rotation. Specifically, for the M=2 scenario, the mixture consists of the original images and their \(90^{\circ}\) counterclockwise-rotated counterparts. For the M=3 scenario, we introduce an additional component of \(180^{\circ}\) rotated images. We define the distribution of original images as $p_0$, and the distributions of \(90^{\circ}\) and \(180^{\circ}\) counterclockwise rotated images as $p_1$ and $p_2$. No rotation-specific architectural or initialization bias is introduced to the VAEs. All neural networks in our experiment are randomly initialized. 

The analysis focuses on scenarios with $M=2$ and $M=3$ latent distributions. As noted in Remark \ref{rmk:M}, higher values of $M$ necessitate greater discriminability between inherent distributions to maintain partitioning stability. In addition, a low-dimensional inherent distribution mixture facilitates intuitive visualization and interpretation of the learned mixture structures, which is particularly useful for understanding how clients' local data distributions are composed.  Larger \(M\) also introduce significantly higher computational complexity. In this work, we therefore focus on small \(M\) to strike a balance between stability and interpretability, while leaving the exploration of large \(M\) situations to future work.

To further validate FedGMI under real-world heterogeneous conditions beyond synthetically constructed rotations, we extend our experiments to the EMNIST dataset. Unlike the synthetic rotations used in MNIST and CIFAR-10, heterogeneity in this case arises from the intrinsic morphological variations between uppercase and lowercase letters. In this setting, the two inherent distributions correspond to uppercase and lowercase letters respectively, providing a more realistic benchmark for the PM scenario.

We compare FedGMI against two strong baselines: IFCA\cite{ref:ifca} and FedSoft\cite{ref:fedsoft}. We modified the original FedSoft workflow since the original FedSoft selects users for each inherent distribution based on the estimated weights, which leads to an effective training size larger than \(k\) clients and non-uniform participation probabilities across clients. 
To maintain a controlled experimental environment, we standardized the client selection process across all methods, ensuring that each round involves a fixed number of participating clients with uniform selection probability.

We assess performance using three primary metrics:
\begin{enumerate}
    \renewcommand{\theenumi}{\alph{enumi}}
    \renewcommand{\labelenumi}{\theenumi)}
    \item \textbf{Accuracy of Distribution-Expert Classifiers}:  This measures the accuracy of distribution-expert classifiers learned for each inherent distributions, reflecting FedGMI's ability to effectively capture the pre-set inherent distributions.
    \item \textbf{Client-associated accuracy}: This evaluates the accuracy on each client's local dataset, where samples are partitioned and classified by their corresponding expert classifiers. This metric evaluates the personalization performance of FedGMI.
    \item \textbf{Mixture factors}: We compare the inferred mixture factors $\alpha_i^j$ against the ground truth to evaluate the precision of the mixture inference.
\end{enumerate}

To evaluate FedGMI under different communication constraints, we consider two selection sizes $K$ per communication round to simulate different communication case: a low communication case ($K=5$) and a high communication case ($K=20$). This allows us to assess the framework's stability and convergence across varying levels of client participation.

\subsection{Implementation Details}
Our proposed framework, FedGMI, and all baseline methods were implemented using the \textit{PyTorch 2.4.0} framework with \textit{CUDA 12.2}. All experiments were conducted on a server equipped with four \textit{NVIDIA RTX 4090} GPUs, each with 24GB of VRAM, and \textit{Ubuntu 20.04.4} as the operating system. For all experiments, we set the number of clients $N=50$ and the maximum communication rounds $T_{max} = 200$ for MNIST and $T_{max} = 500$ for CIFAR-10. We referred to \cite{ref:trainvae} and added normalization terms to the VAE training objective to prevent posterior corruption. We adopt MLP as the backbone structure of VAEs and classifiers for MNIST and CNN for CIFAR-10 and EMNIST. For a more detailed demonstration of hyperparameters, network structures, and to ensure full reproducibility, please refer to our open-source code repository, which will be publicly available upon publication.

\begin{table*}[htbp]
\centering
\resizebox{0.8\textwidth}{!}{%
\begin{tabular}{@{}llccccccccc@{}}
\toprule
 &  & \multicolumn{3}{c}{IFCA} & \multicolumn{3}{c}{FedSoft} & \multicolumn{3}{c}{FedGMI} \\ \midrule
\multicolumn{1}{l|}{} &  & \multicolumn{1}{l}{$c_0$} & \multicolumn{1}{l}{$c_1$} & \multicolumn{1}{l}{client} & \multicolumn{1}{l}{$c_0$} & \multicolumn{1}{l}{$c_1$} & \multicolumn{1}{l}{client} & \multicolumn{1}{l}{$c_0$} & \multicolumn{1}{l}{$c_1$} & \multicolumn{1}{l}{client} \\ \cmidrule(l){3-11} 
\multicolumn{1}{l|}{MNIST-5} & $p_0$ & {\ul 85.65} & 4.05 & 90.03 & 90.72 & {\ul 90.76} & \textbf{95.78} & {\ul \textbf{93.22}} & 68.98 & 95.00 \\
\multicolumn{1}{l|}{} & $p_1$ & {\ul 89.25} & 3.59 &  & {\ul 90.58} & 90.35 &  & 31.16 & {\ul \textbf{93.75}} &  \\ \midrule
\multicolumn{1}{l|}{MNIST-20} & $p_0$ & 91.28 & {\ul 93.86} & \textbf{97.8} & 91.76 & {\ul \textbf{92.38}} & 96.82 & {\ul 91.98} & 48.85 & 94.89 \\
\multicolumn{1}{l|}{} & $p_1$ & {\ul \textbf{94.04}} & 89.13 &  & {\ul 92.08} & 91.36 &  & 64.44 & {\ul 93.79} &  \\ \midrule
\multicolumn{1}{l|}{CIFAR10-5} & $p_0$ & {\ul 66.79} & 10.00 & 74.68 & {\ul 56.11} & 53.51 & \textbf{98.98} & {\ul \textbf{67.86}} & 57.66 & 82.70 \\
\multicolumn{1}{l|}{} & $p_1$ & {\ul 67.61} & 10.00 &  & 53.76 & {\ul 56.77} &  & 57.46 & {\ul \textbf{68.23}} &  \\ \midrule
\multicolumn{1}{l|}{CIFAR10-20} & $p_0$ & {\ul 64.96} & 60.58 & 83.35 & 56.05 & {\ul 61.66} & \textbf{99.59} & {\ul \textbf{67.04}} & 60.22 & 85.30 \\
\multicolumn{1}{l|}{} & $p_1$ & {\ul 67.04} & 58.64 &  & {\ul 60.15} & 58.64 &  & 55.41 & {\ul \textbf{67.52}} &  \\ \bottomrule
\end{tabular}%
}
\caption{\textbf{Accuracy of classification} ($M=2$): a cross-evaluation to characterize the $M$ models automatically discovered by each method. The columns, $c_{j}$, denote these learned models, whose indexing is arbitrary. The rows, $p_{j}$, represent the test datasets of distinct distributions. Each cell shows the performance of a given model on a specific test distribution, revealing its learned specialization. The title of each row follows the format \textit{Dataset-Selection Size}, e.g., MNIST-5.}
\label{tab:m_2}
\end{table*}

\subsection{Evaluation under Synthetic Heterogeneity}
\subsubsection{$M=2$ scenario}

The first experiment is the $M=2$ scenario, where only 2 inherent distributions exist. This makes the mixture structure clear and intuitive so that we can infer the distribution VAEs actually learned by observing the proportions of local data division. 

\begin{figure}[h!]
\centering
\begin{tabular}{cc} 
    \subcaptionbox{MNIST, selection size = 5 \label{fig:alpha(a)}}{
        \includegraphics[width=0.42\linewidth]{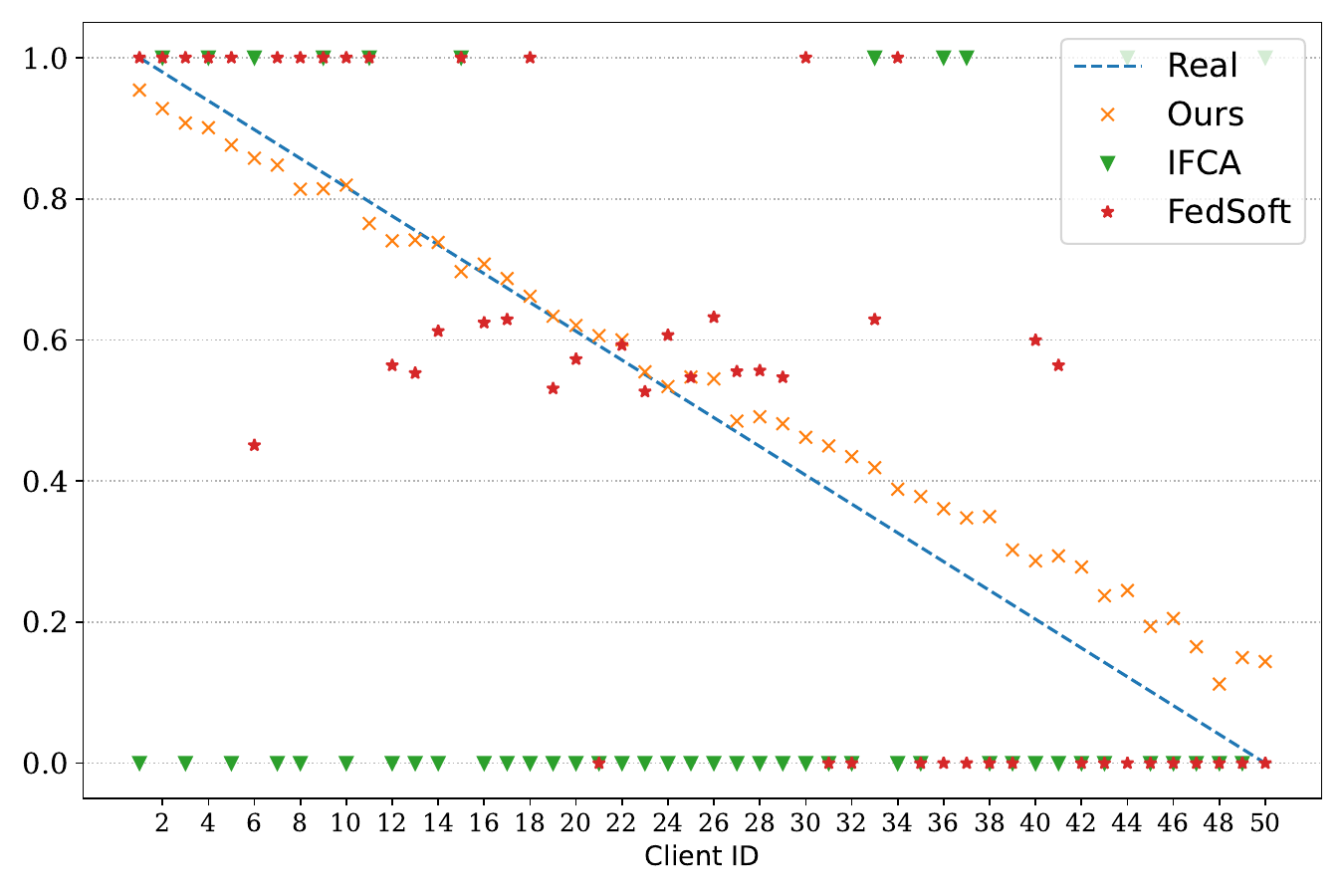}
    } &
    \subcaptionbox{MNIST, selection size = 20 \label{fig:alpha(b)}}{
        \includegraphics[width=0.42\linewidth]{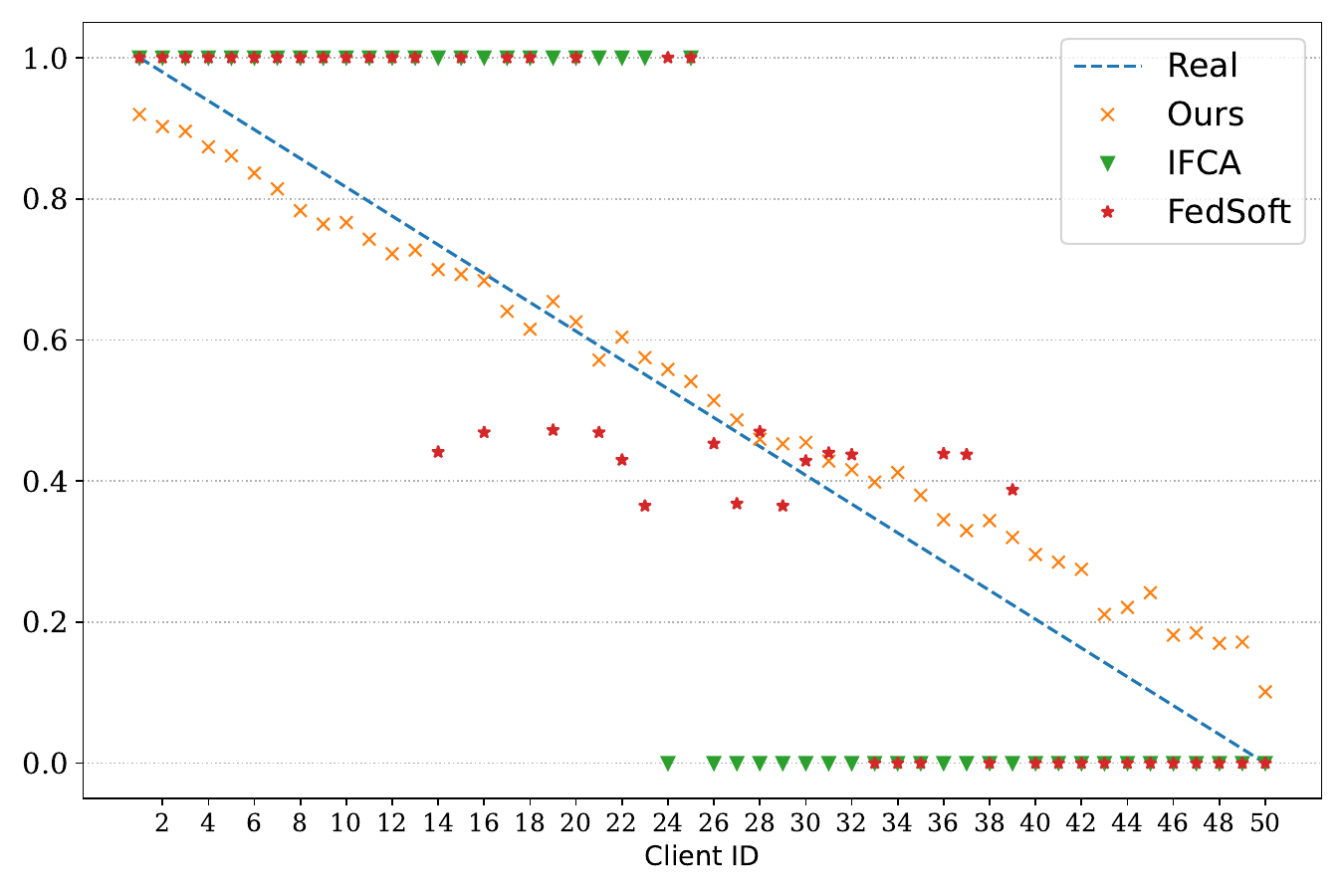}
    } \\
    \\
    \subcaptionbox{CIFAR10, selection size = 5 \label{fig:alpha(c)}}{
        \includegraphics[width=0.42\linewidth]{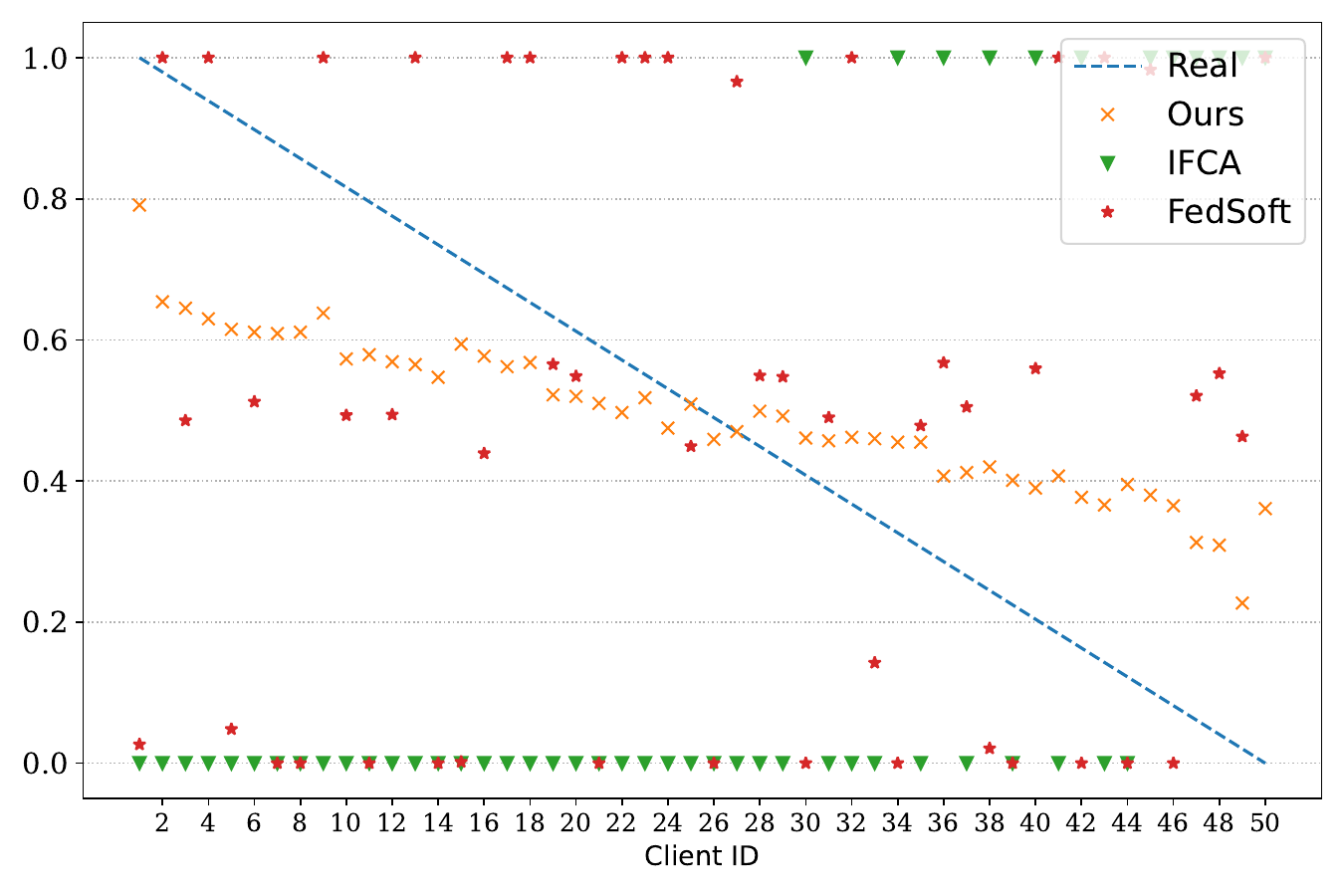}
    } &
    \subcaptionbox{CIFAR10, selection size = 20 \label{fig:alpha(d)}}{
        \includegraphics[width=0.42\linewidth]{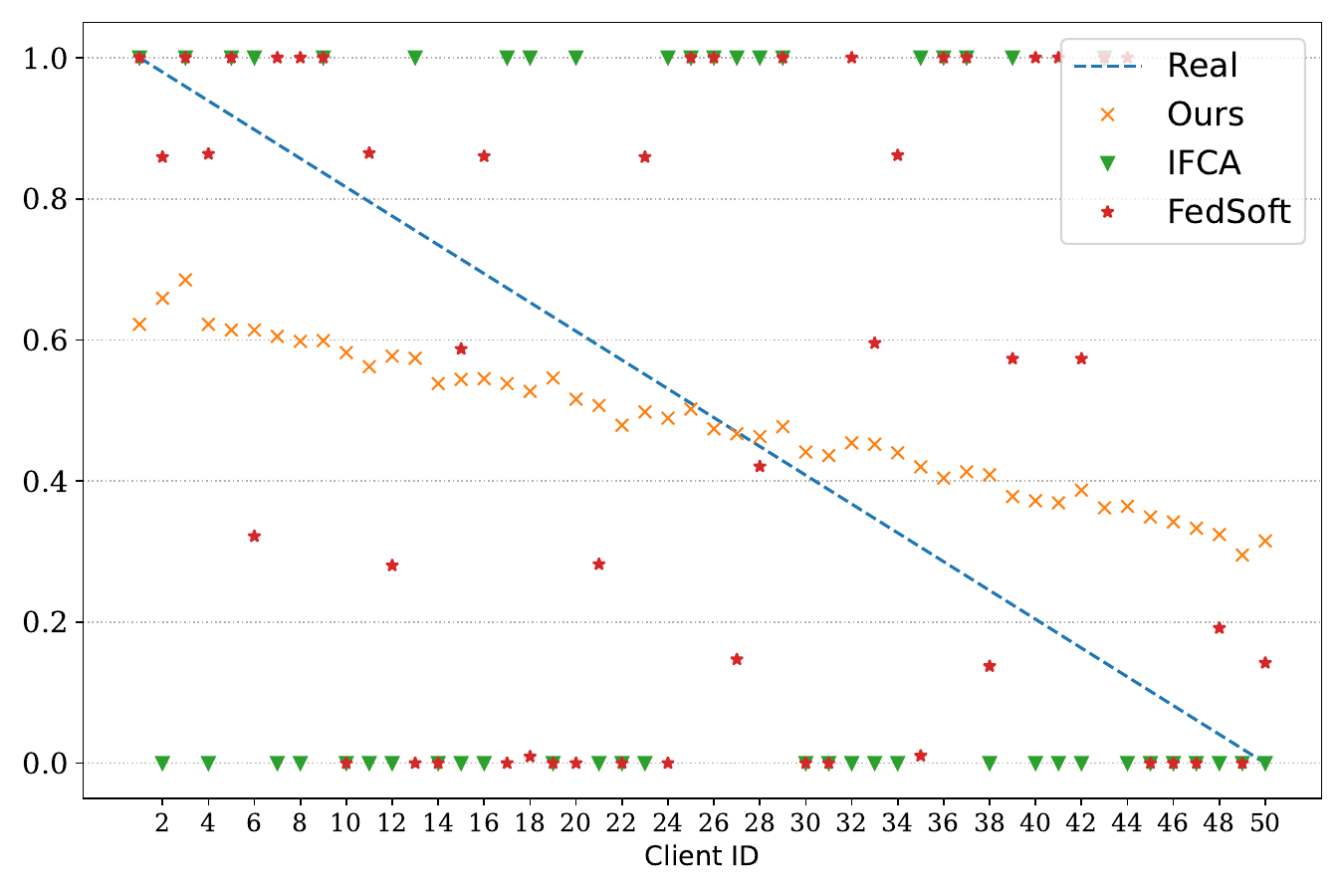}
    }
\end{tabular}
\caption{Estimation of the proportion of $p_0$ in $M=2$ situation.}
\label{fig:alpha}
\end{figure}
Fig.\ref{fig:alpha} illustrates the estimated proportion of samples from $p_0$ in each client’s local dataset under the setting where the local data distributions are artificially constructed to vary linearly across clients. The blue dashed line represents the ground-truth proportions, while the scatter points denote the estimated proportions produced by each method, whose values range from 0 to 1. 
On the relatively simple MNIST dataset, FedGMI provides accurate and robust estimation of the distribution proportions across both communication settings. For the more complex CIFAR-10 dataset, although the predicted proportions exhibit a compressed dynamic range compared to the ground truth, they strictly preserve the underlying monotonic trend and remain stable under varying communication costs. This phenomenon suggests that FedGMI tends to produce more conservative estimates as data complexity increases while preserving the monotonic trend.
IFCA does not explicitly output mixture proportions but instead assigns each client to a single cluster, and thus its outputs can only be interpreted as binary proportion estimates. Through this perspective, we can still examine the algorithm’s ability to perceive the relationships among clients. IFCA separates clients into two clusters reasonably well on MNIST under the high-communication setting (Fig.\ref{fig:alpha(b)}), while the assignments deviate significantly from the ground truth in the low communication case and on CIFAR-10.
FedSoft exhibits a stage-like pattern on MNIST that remains stable across settings but lacks the precision and granularity compared to FedGMI. On CIFAR-10, FedSoft still shows a stage-like pattern but with significantly increased noise.

The aforementioned linear pattern of distributions represents an idealized scenario, whereas in practical settings, user distributions are typically random. Table.\ref{tab:m_2} presents the classification accuracy for clients with randomly mixed distributions, where the ground truth proportion of $p_0$ is sampled from $Uniform[0,1]$. 

In the low communication case, IFCA converges to a single model for both datasets, with the second model failing to provide valid classification. 
Both FedSoft and FedGMI successfully yield two distinct classifiers specialized for different distributions. In this setting, FedGMI exhibits a more pronounced performance gap when each learned classifier is evaluated on two different distributions. Specifically, each classifier maintains high accuracy on its target distribution while showing significantly lower accuracy on mismatched ones. 
FedSoft also achieves high peak accuracy; however, the gap between its performance on matched and mismatched test sets is smaller compared to FedGMI, particularly on the CIFAR-10 dataset.

In the high communication case, FedGMI achieves competitive results in accuracy and maintains a consistent deviation between clusters for both distributions across both datasets. IFCA shows the ability to distinguish clusters on MNIST, while the result is unbalanced on CIFAR-10. FedSoft performs slightly better than the low communication case. FedGMI achieves slightly lower accuracy on the best-matched inherent distributions of each classifier compared to the low communication case, but outperforms the mismatched ones, suggesting a potential misallocation of computational resources. The optimum communication consumption for FedGMI needs to be studied in future research.

Fig.\ref{fig:acccurve} reports the classification accuracy curves of different methods on the CIFAR-10 dataset under two communication settings. Due to space limitations, we only present the CIFAR-10 results, as CIFAR-10 constitutes a more challenging and representative benchmark compared with MNIST. Overall, FedGMI demonstrates consistently strong performance across all settings. More importantly, a clear and stable performance gap can be observed for the same model evaluated on different inherent distributions throughout the training process. In contrast, FedSoft shows much smaller inter-distribution gaps, particularly under the low communication case, and its accuracy increases more slowly. IFCA, while achieving relatively fast convergence and competitive accuracy, tends to concentrate training resources on a single effective model (Fig.\ref{fig:ifca_cifar5}) or to learn models without meaningful inter-distribution differentiation (Fig.\ref{fig:ifca_cifar20}) , which leads to the collapse of mixture inference. The trends observed in Fig.\ref{fig:acccurve} are consistent with the results reported in Fig.\ref{fig:alpha} and Table.\ref{tab:m_2}.

Fig.\ref{fig:sample} shows an example of the generated samples of each inherent VAE to intuitively observe the learned inherent distributions. As shown in Fig.\ref{fig:sample(a)} and \ref{fig:sample(b)}, $p_0$ generates handwritten digits with original angle and $p_1$ generates digits rotated $90^{\circ}$ counter-clockwise, which is consistent with the way we construct the two distributions. These samples suggest that our method can capture heterogeneity among multiple data distributions in a way that aligns with human perception.

From the experimental results above, we observe that FedGMI demonstrates a superior capacity to capture the pre-set rotation-based heterogeneity. This is evidenced by several key factors: the estimated mixture proportions are more consistent with the ground truth, the expert classifiers exhibit a more obvious performance gap across different distributions, and the final classification accuracy leads the baselines in most scenarios.
In contrast, IFCA and FedSoft show limited sensitivity to rotation-based heterogeneities. This may suggest that these frameworks are capturing alternative latent structural combinations rather than implying an lack of capability. 
However, within our controlled experimental environment where class distributions are uniform across clients, rotation constitutes the predominant source of heterogeneity. Consequently, the results indicate that FedGMI captures the dominant factor of heterogeneity of the data distribution compared to the baselines. Furthermore, the stability of this performance across varying $K$ validates that FedGMI is robust under diverse communication constraints.

\begin{figure}[h]
    \centering
    \setlength{\tabcolsep}{4pt}
    \renewcommand{\arraystretch}{1.15}

    \begin{tabular}{cc}
        \subcaptionbox{FedGMI--CIFAR10--5\label{fig:gmi_cifar5}}{
            \includegraphics[width=0.45\columnwidth]{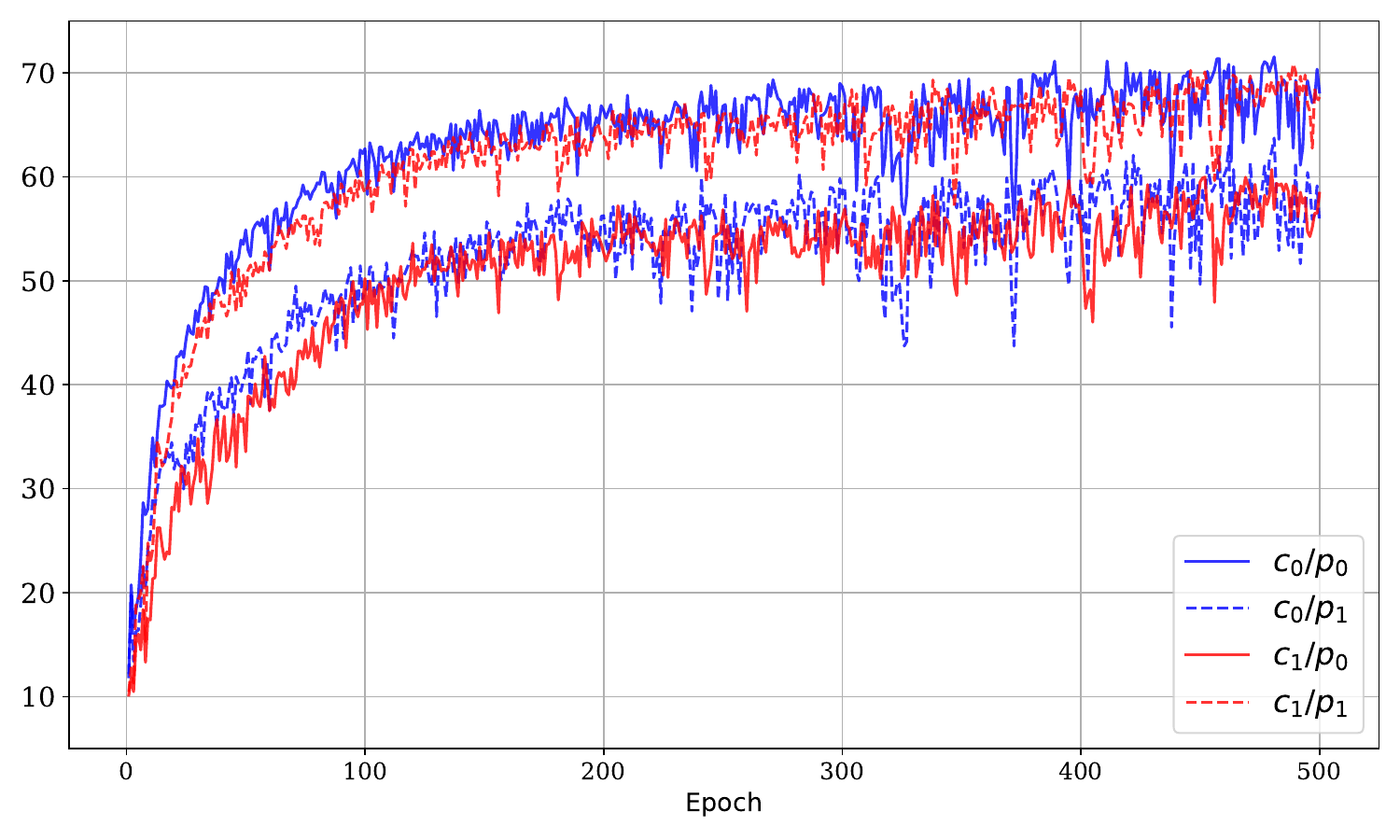}
        } &
        \subcaptionbox{FedGMI--CIFAR10--20\label{fig:gmi_cifar20}}{
            \includegraphics[width=0.45\columnwidth]{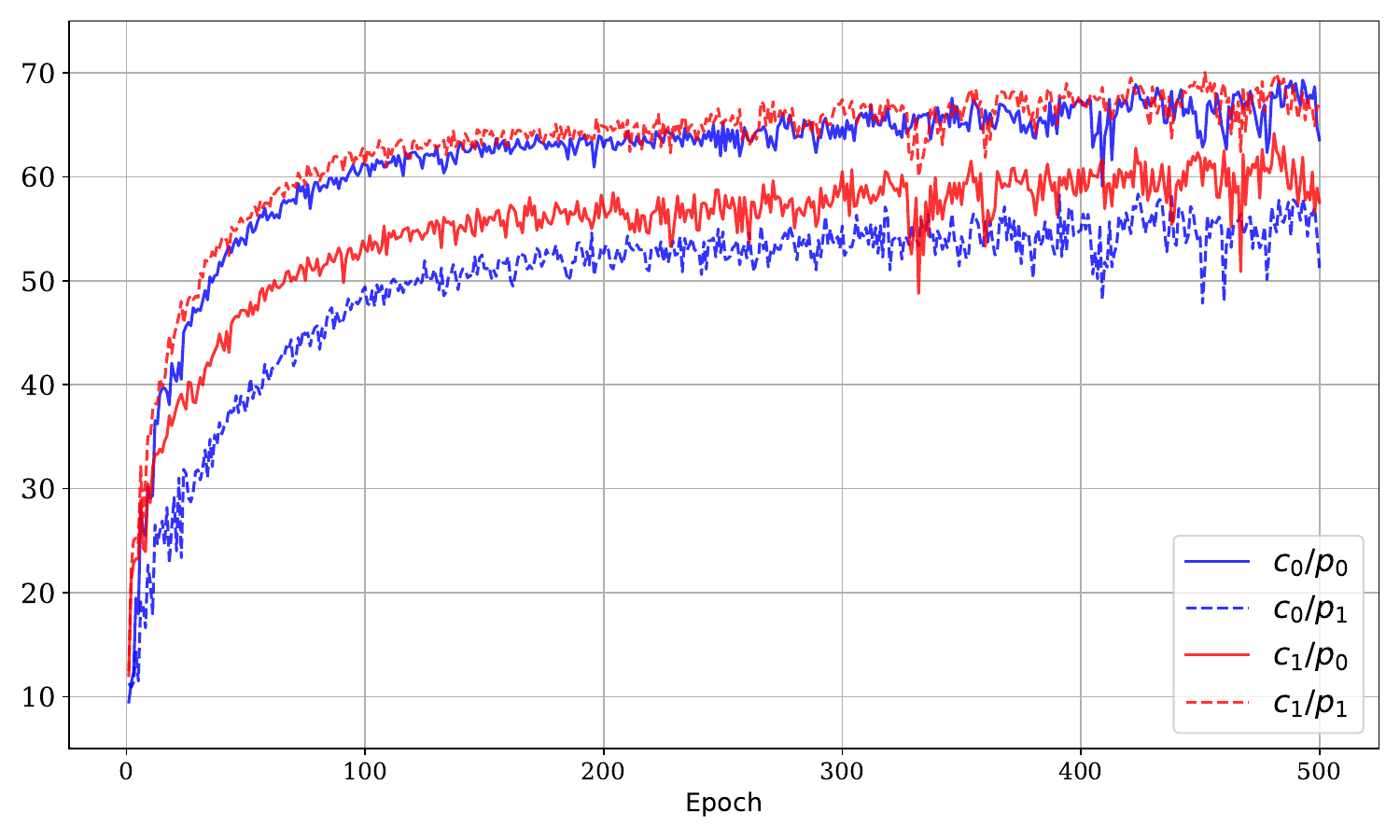}
        } \\

        \subcaptionbox{FedSoft--CIFAR10--5\label{fig:soft_cifar5}}{
            \includegraphics[width=0.45\columnwidth]{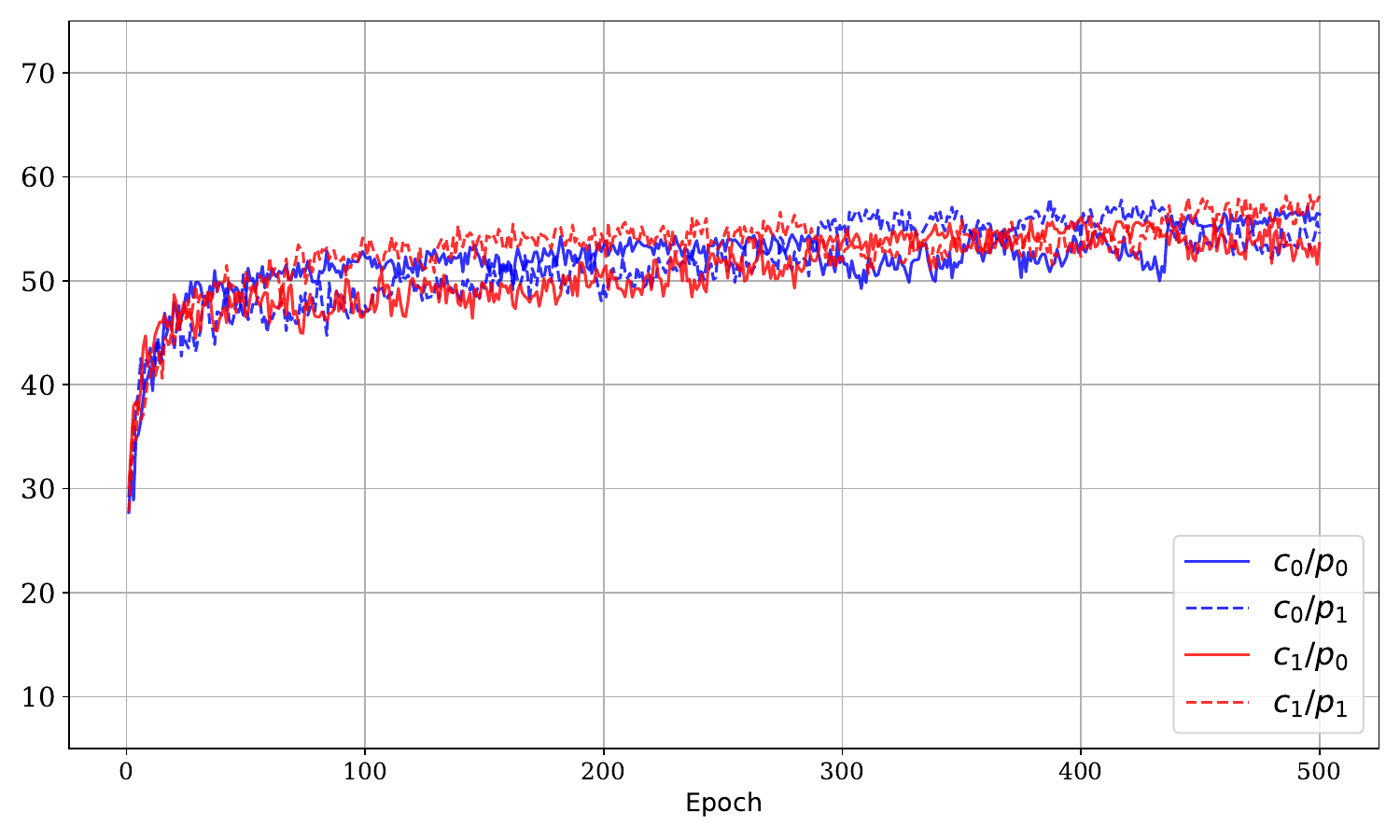}
        } &
        \subcaptionbox{FedSoft--CIFAR10--20\label{fig:soft_cifar20}}{
            \includegraphics[width=0.45\columnwidth]{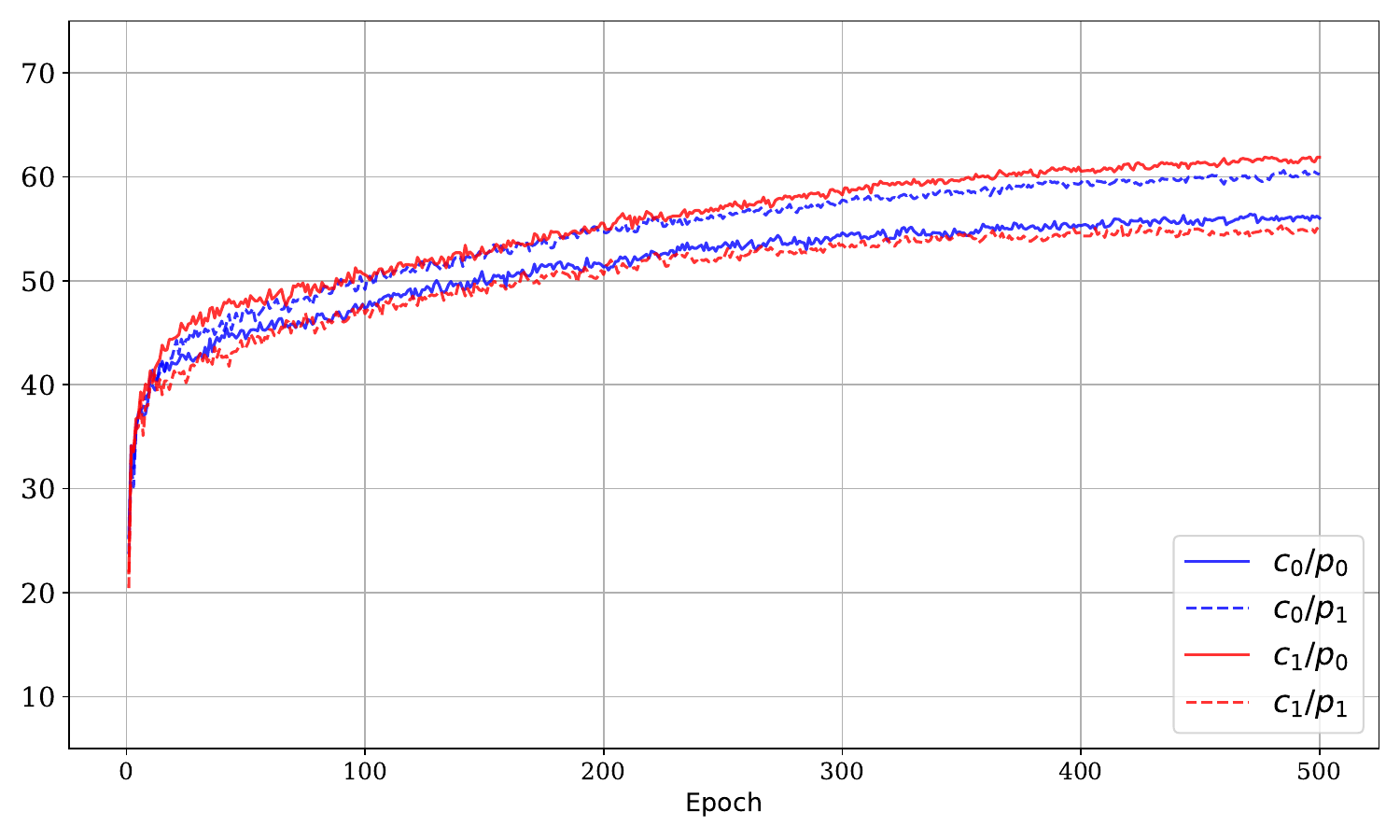}
        } \\

        \subcaptionbox{IFCA--CIFAR10--5\label{fig:ifca_cifar5}}{
            \includegraphics[width=0.45\columnwidth]{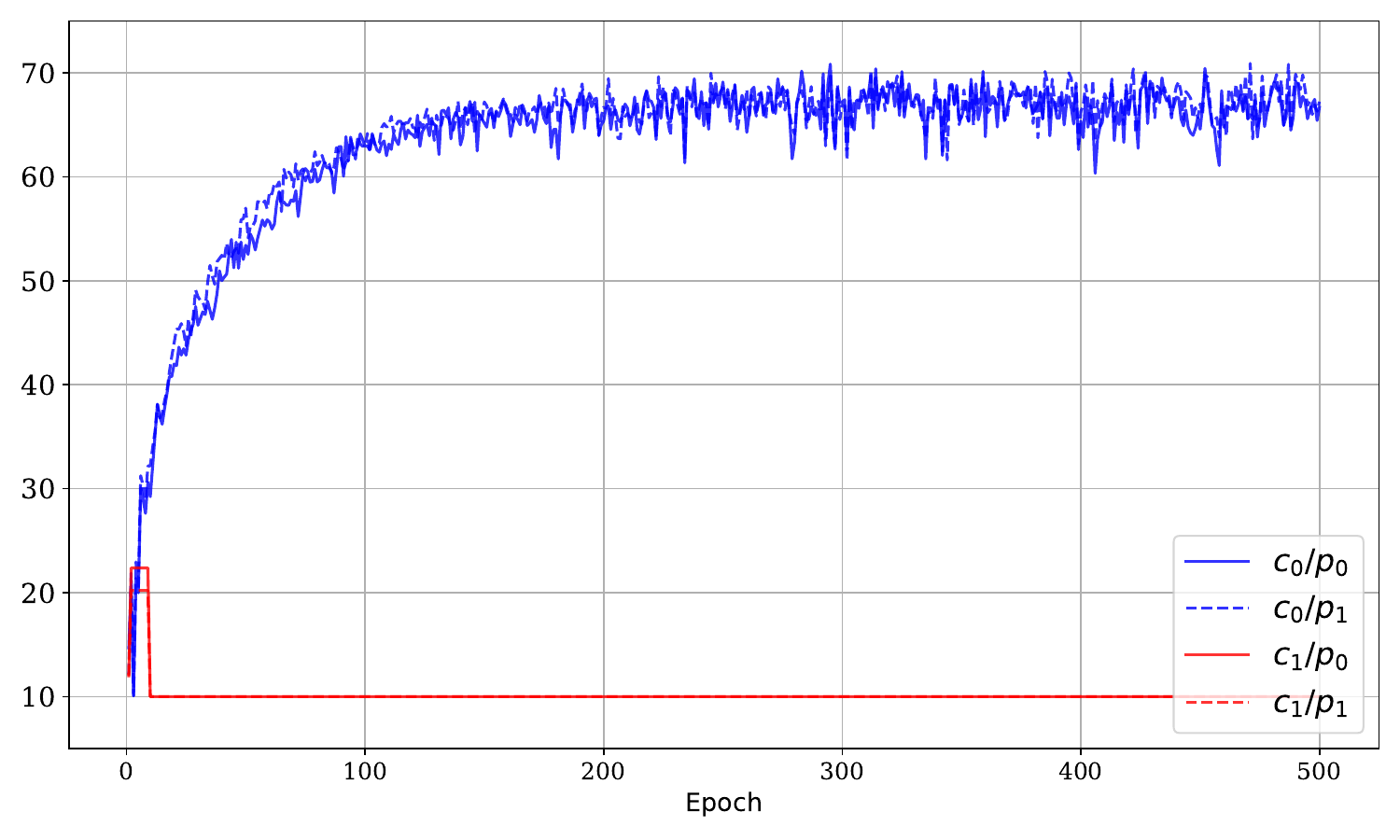}
        } &
        \subcaptionbox{IFCA--CIFAR10--20\label{fig:ifca_cifar20}}{
            \includegraphics[width=0.45\columnwidth]{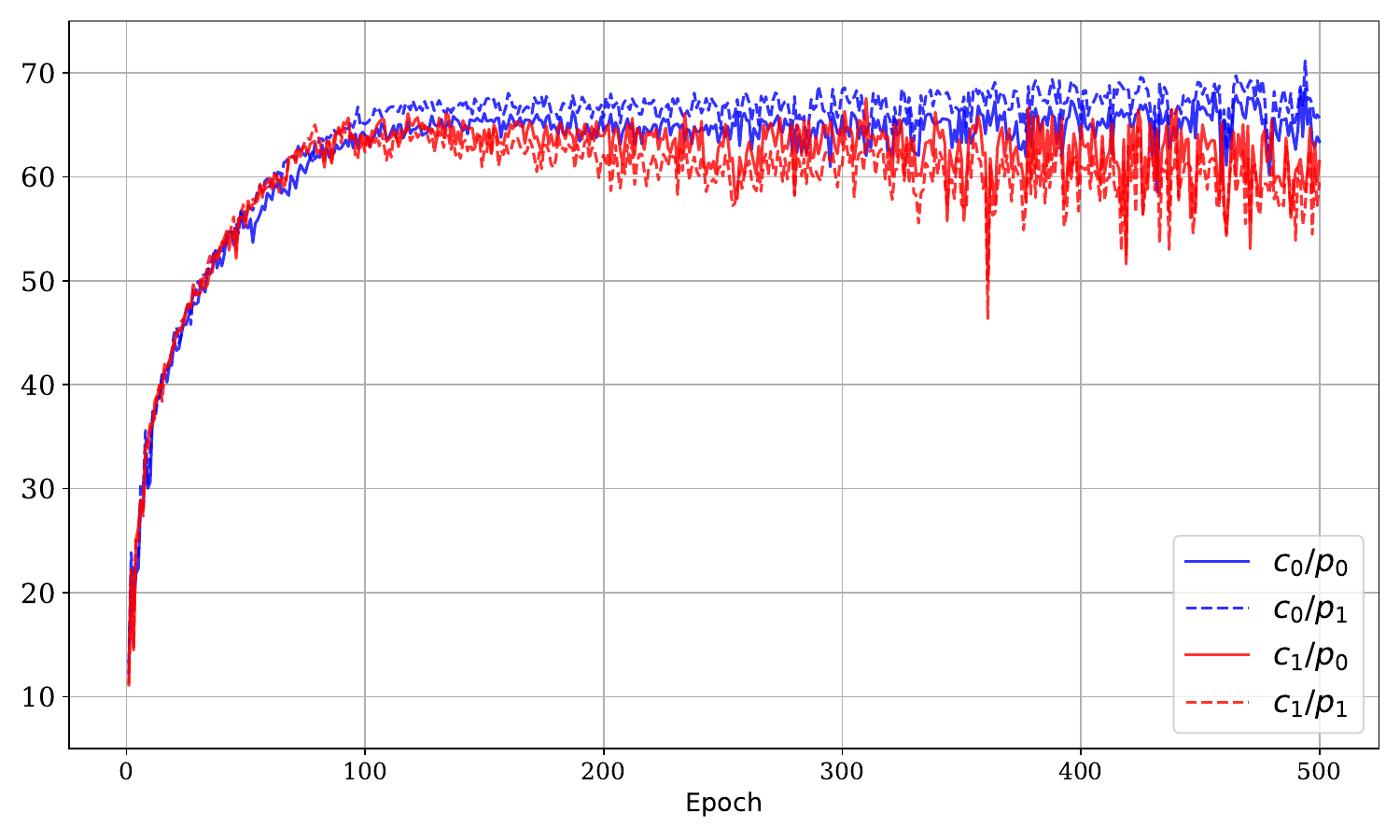}
        } \\
    \end{tabular}

    \caption{\textbf{Classification accuracy on CIFAR-10 over training rounds.}
    Each subfigure is labeled in the format \textit{Method-Dataset-Selection Size}. Different colors represent different clusters, and solid and dashed lines indicate the performance on different components of the  data distribution mixture. }
    \label{fig:acccurve}
\end{figure}

\begin{figure}[htb]
\centering
\subfloat[$p_0$]{
		\includegraphics[width=0.8\columnwidth]{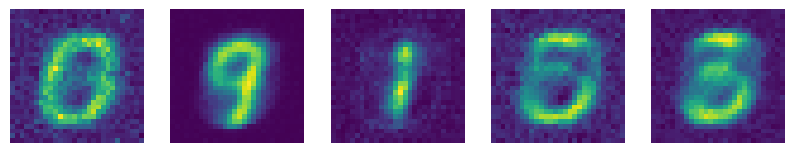}
		\label{fig:sample(a)}}\\
\subfloat[$p_1$]{
		\includegraphics[width=0.8\columnwidth]{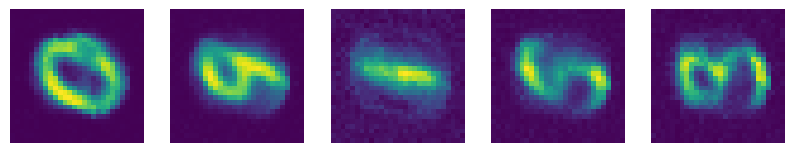}
		\label{fig:sample(b)}}
\caption{Image generated by the VAEs in $M=2$ situation}
\label{fig:sample}
\end{figure}

\subsubsection{$M=3$ scenario} The second experiment investigates the $M=3$ scenario. In this case, the combination fraction ${\alpha_i^j}$ is in 3-dimensional space, which cannot be visualized as intuitively as shown in Fig.\ref{fig:alpha}. 

\begin{table*}[htbp]
\centering
\resizebox{0.9\textwidth}{!}{%
\begin{tabular}{@{}cllllclllclllc@{}}
\toprule
\multicolumn{1}{l}{} &  & \multicolumn{4}{l}{IFCA} & \multicolumn{4}{l}{FedSoft} & \multicolumn{4}{l}{FedGMI} \\ \midrule
\multicolumn{1}{l|}{} &  & \multicolumn{1}{c}{$c_0$} & \multicolumn{1}{c}{$c_1$} & \multicolumn{1}{c}{$c_2$} & client & \multicolumn{1}{c}{$c_0$} & \multicolumn{1}{c}{$c_1$} & \multicolumn{1}{c}{$c_2$} & client & \multicolumn{1}{c}{$c_0$} & \multicolumn{1}{c}{$c_1$} & \multicolumn{1}{c}{$c_2$} & client \\ \cmidrule(l){3-14} 
\multicolumn{1}{c|}{\multirow{3}{*}{MNIST-5}} & $p_0$ & {\ul 91.18} & 84.68 & 87.59 & \multirow{3}{*}{92.50} & 84.22 & 85.64 & 85.02 & \multirow{3}{*}{\textbf{93.11}} & 34.15 & 39.18 & {\ul \textbf{93.36}} & \multirow{3}{*}{91.77} \\
\multicolumn{1}{c|}{} & $p_1$ & 88.63 & {\ul 92.33} & 88.99 &  & 83.17 & 82.56 & 83.24 &  & {\ul \textbf{93.99}} & 28.47 & 23.66 &  \\
\multicolumn{1}{c|}{} & $p_2$ & 88.28 & 88.54 & {\ul 91.10} &  & {\ul 91.65} & {\ul 92.07} & {\ul 91.98} &  & 29.40 & {\ul \textbf{93.26}} & 56.28 &  \\ \midrule
\multicolumn{1}{c|}{\multirow{3}{*}{MNIST-20}} & $p_0$ & {\ul \textbf{91.58}} & 86.47 & 89.81 & \multirow{3}{*}{92.38} & 88.50 & 89.25 & 88.23 & \multirow{3}{*}{\textbf{96.23}} & 18.47 & 78.98 & {\ul 91.19} & \multirow{3}{*}{92.47} \\
\multicolumn{1}{c|}{} & $p_1$ & 88.66 & 89.59 & {\ul 90.00} &  & 87.10 & 87.22 & 88.15 &  & {\ul \textbf{93.80}} & 17.61 & 82.48 &  \\
\multicolumn{1}{c|}{} & $p_2$ & 86.11 & {\ul 90.53} & 88.75 &  & {\ul 92.69} & {\ul 92.41} & {\ul \textbf{92.49}} &  & 34.64 & {\ul 91.48} & 55.59 &  \\ \midrule
\multicolumn{1}{c|}{\multirow{3}{*}{CIFAR10-5}} & $p_0$ & 48.12 & {\ul \textbf{68.08}} & 51.50 & \multirow{3}{*}{74.78} & 54.17 & {\ul 55.82} & 47.29 & \multirow{3}{*}{\textbf{99.02}} & 59.03 & 64.06 & {\ul 66.76} & \multirow{3}{*}{72.86} \\
\multicolumn{1}{c|}{} & $p_1$ & 51.43 & {\ul 67.10} & 50.74 &  & {\ul 55.74} & 53.88 & 48.95 &  & {\ul \textbf{70.91}} & 56.00 & 54.20 &  \\
\multicolumn{1}{c|}{} & $p_2$ & 54.59 & {\ul 64.25} & 45.29 &  & 52.57 & {\ul 53.63} & 47.64 &  & 54.96 & {\ul \textbf{68.20}} & 59.99 &  \\ \midrule
\multicolumn{1}{c|}{\multirow{3}{*}{CIFAR10-20}} & $p_0$ & {\ul 63.89} & 62.55 & 61.65 & \multirow{3}{*}{74.78} & 53.56 & 55.12 & {\ul 56.33} & \multirow{3}{*}{\textbf{96.23}} & 57.60 & {\ul \textbf{66.46}} & 62.57 & \multirow{3}{*}{81.06} \\
\multicolumn{1}{c|}{} & $p_1$ & 63.41 & 60.43 & {\ul 64.40} &  & {\ul 53.09} & 51.19 & 50.00 &  & {\ul \textbf{66.76}} & 55.55 & 60.49 &  \\
\multicolumn{1}{c|}{} & $p_2$ & 63.29 & 61.18 & {\ul \textbf{65.65}} &  & 53.09 & {\ul 54.92} & 53.34 &  & 56.44 & 62.42 & {\ul 64.84} &  \\ \bottomrule
\end{tabular}%
}
\caption{\textbf{Accuracy of classification} ($M=3$): the notation for rows and columns is identical to that used in Table.\ref{tab:m_2}.}
\label{tab:m_3}
\end{table*}

Consistent with Table.\ref{tab:m_2}, Table.\ref {tab:m_3} shows the classification accuracy in the $M=3$ scenario under both low and high communication cases, where the local data distributions are random mixtures of the inherent distributions. 
Consistent with the trends observed in the $M=2$ case, FedGMI maintains distinct and specialized classifiers for each distribution across both datasets. In contrast, the performance profiles of FedSoft and IFCA show less differentiation between the learned models. The significant accuracy gap maintained by FedGMI when evaluating models across different distributions suggests a more precise partitioning of local data. Additionally, FedGMI achieves higher classification accuracy in the majority of tested scenarios compared to the baselines.

Experimental results of two scenarios indicate that FedGMI is a stable and high-efficiency framework for Federated Learning under probabilistic mixture situations. VAEs in our algorithm successfully divided the local data, which can be used to support the classification task and obtain distribution-expert models for each inherent distribution. In addition, VAEs can effectively model heterogeneous local data distributions and serve as density estimators. Such generative modeling of local distributions provides valuable distributional information, which can be leveraged to enhance the performance of downstream tasks in federated learning.

\subsection{Evaluation under Real-world Heterogeneity}

This subsection aims to examine whether the effectiveness of FedGMI extends from artificially constructed heterogeneous data to scenarios with naturally existing inherent distributions, and whether its behavior is consistent with that observed on MNIST and CIFAR-10.

As shown in Table.\ref{tab:emnist}, FedGMI consistently outperforms the baseline methods in terms of classification accuracy on both the uppercase and lowercase letter subsets. Moreover, a clear performance gap between the two inherent distributions can still be observed, indicating that FedGMI maintains differentiated modeling of the inherent components under natural heterogeneity. Furthermore, FedGMI achieves competitive client-associated accuracy, demonstrating that the learned distribution-expert classifiers effectively enhance personalized performance.

\begin{table}[ht]
    \centering
    \subfloat[Low Communication]{
    \resizebox{\columnwidth}{!}{%
    \begin{tabular}{@{}lccccccccc@{}}
    \toprule
     & \multicolumn{3}{c}{IFCA} & \multicolumn{3}{c}{FedSoft} & \multicolumn{3}{c}{FedGMI} \\ \midrule
     & \multicolumn{1}{l}{$c_0$} & \multicolumn{1}{l}{$c_1$} & \multicolumn{1}{l}{client} & \multicolumn{1}{l}{$c_0$} & \multicolumn{1}{l}{$c_1$} & \multicolumn{1}{l}{client} & \multicolumn{1}{l}{$c_0$} & \multicolumn{1}{l}{$c_1$} & \multicolumn{1}{l}{client} \\ \cmidrule(l){2-10} 
    \multicolumn{1}{l|}{upper} & {\ul 67.57} & 36.21 & 75.55 & {\ul 50.57} & 45.21 & 62.81 & {\ul\bf 86.83} & 76.09 & \textbf{93.71} \\
    \multicolumn{1}{l|}{lower} & {\ul 72.81} & 34.24 &  & 45.32 & {\ul 52.75} &  & 75.33 & {\ul \textbf{90.76}} &  \\ \bottomrule
    \end{tabular}%
    }
    }\\[4pt]

    \subfloat[High Communication]{
    \resizebox{\columnwidth}{!}{%
    \begin{tabular}{@{}llcclcclcc@{}}
    \toprule
        & \multicolumn{3}{c}{IFCA} & \multicolumn{3}{c}{FedSoft} & \multicolumn{3}{c}{FedGMI} \\ \midrule
        & $c_0$ & \multicolumn{1}{l}{$c_1$} & \multicolumn{1}{l}{client} & $c_0$ & \multicolumn{1}{l}{$c_1$} & \multicolumn{1}{l}{client} & $c_0$ & \multicolumn{1}{l}{$c_1$} & \multicolumn{1}{l}{client} \\ \cmidrule(l){2-10} 
    \multicolumn{1}{l|}{upper} & \multicolumn{1}{c}{62.67} & {\ul 85.51} & \textbf{95.88} & \multicolumn{1}{c}{{\ul 61.53}} & 56.53 & 73.82 & \multicolumn{1}{c}{{\ul \textbf{83.18}}} & 72.68 & 93.75 \\
    \multicolumn{1}{l|}{lower} & \multicolumn{1}{c}{74.86} & {\ul 90.38} &  & \multicolumn{1}{c}{61.13} & {\ul 65.36} &  & \multicolumn{1}{c}{75.59} & {\ul \textbf{92.86}} &  \\ \bottomrule
    \end{tabular}%
    }
    }
    \caption{\textbf{Accuracy of classification on EMNIST} : the notation for rows and columns is identical to that used in Table.\ref{tab:m_2}.}
    \label{tab:emnist}
\end{table}

Fig.\ref{fig:emnist_alpha} illustrates the estimated distribution proportions of uppercase letters. Similar to the CIFAR-10 case, FedGMI produces estimation that follows the trend of the ground-truth distribution variations but more conservative. In contrast, the outputs of the two baseline methods exhibit the same patterns as observed on the synthetically constructed datasets, showing less sensitivity to the fine-grained mixture structures present in the natural dataset.

\begin{figure}[h!]
    \centering
    \begin{tabular}{cc} 
        \subcaptionbox{MNIST, selection size = 5 \label{fig:emnist_alpha(a)}}{
            \includegraphics[width=0.42\linewidth]{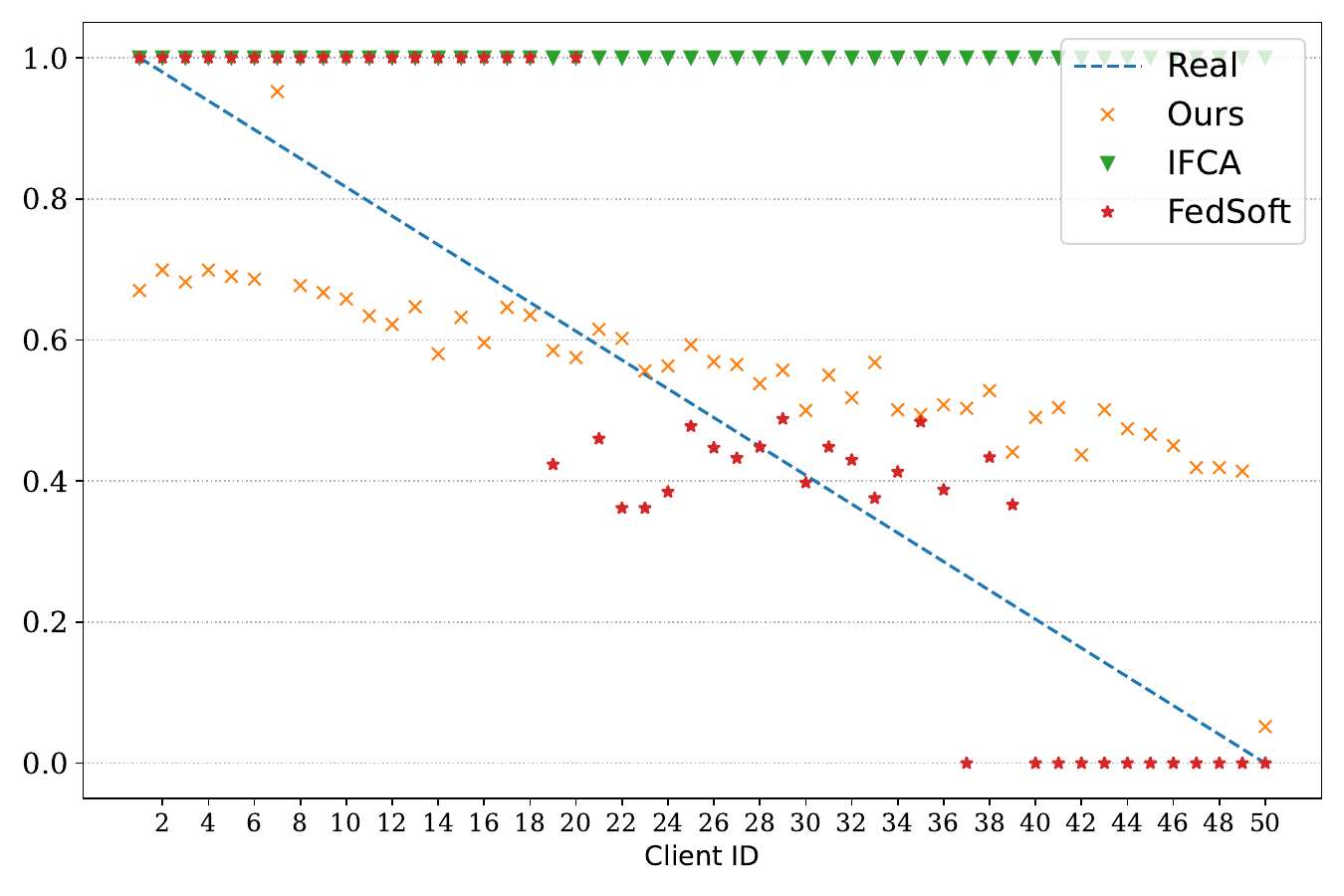}
        }
        \subcaptionbox{MNIST, selection size = 20 \label{fig:emnist_alpha(b)}}{
            \includegraphics[width=0.42\linewidth]{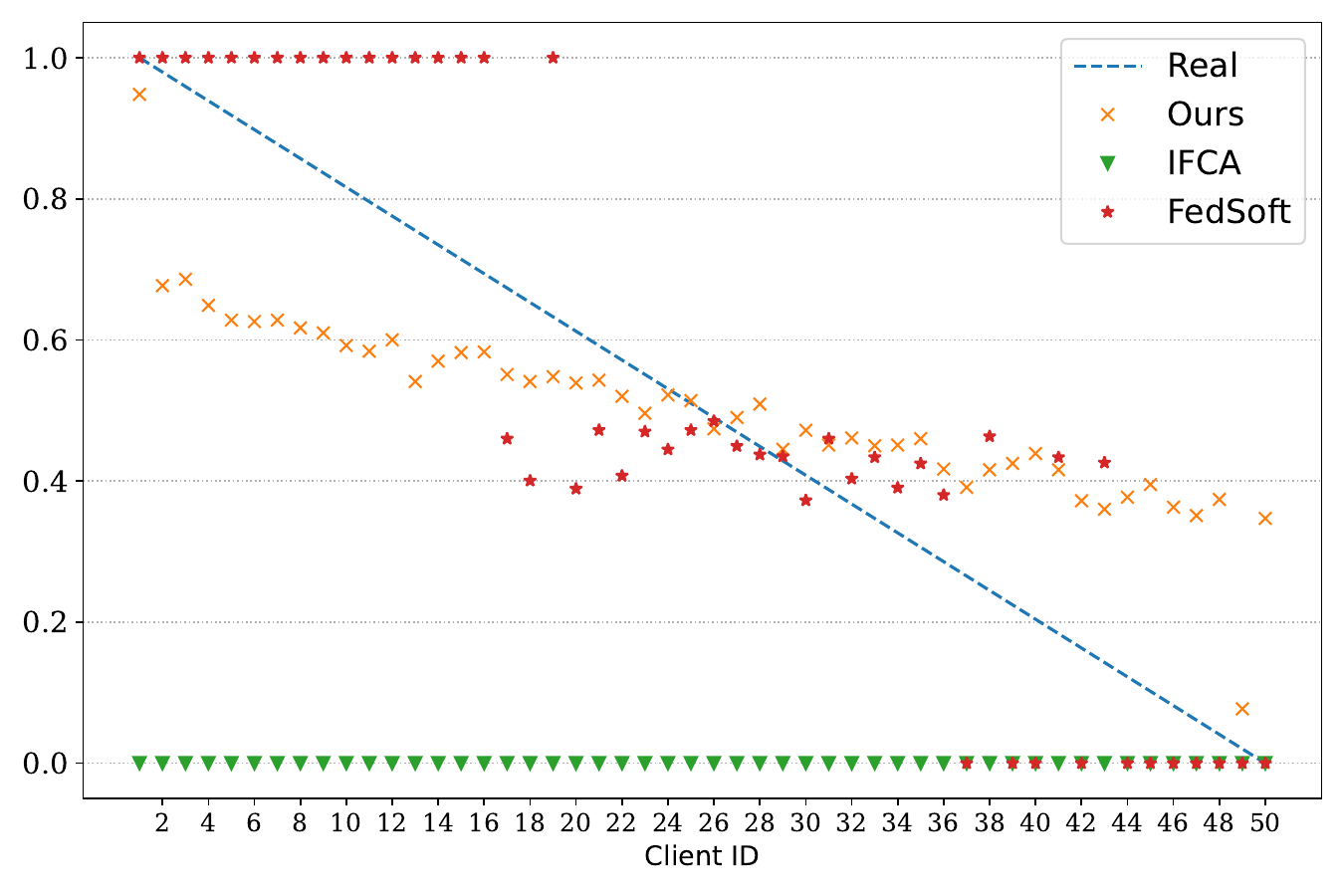}
        }
    \end{tabular}
    \caption{Estimation of the proportion of uppercase letters on EMNIST.}
    \label{fig:emnist_alpha}
\end{figure}

In summary, the results across all evaluations demonstrate that FedGMI effectively manages both synthetic and natural data heterogeneity. Most notably, the framework exhibits high consistency in its behavior across datasets, with its performance on real-world distributions mirroring the robust trends observed in artificially constructed scenarios. This stability across diverse mixture types and communication scales confirms FedGMI as a reliable approach for capturing latent probabilistic structures and enhancing classification performance in heterogeneous federated systems.

\section{Conclusion and Future Work}
In this paper, we studied Federated Learning in the context of the probabilistic mixture situation. We proposed FedGMI, a framework for learning the features of inherent distributions by leveraging generative models for sample-wise data partitioning. 
This framework provided comprehensive insight into the structure of local distributions, which explains the heterogeneity in probabilistic mixture situations. Theoretical analysis showed the convergence of the FedGMI algorithm and explained the necessity of stable initialization. Experimental results also demonstrated the ability of our framework to discriminate and represent the inherent distributions, as well as produce distribution-expert classifiers with higher accuracy. As a future direction, advanced generative models, such as diffusion models\cite{ref:diffusion}, could be integrated into our framework to enhance data partitioning and proportion estimation of data distribution. While such models hold promise due to their powerful generative capacity, developing efficient numerical techniques for this integration remains to be addressed. In addition, an important direction for future work is to incorporate adaptive clustering or dynamic model initialization strategies to automatically determine the number of inherent distributions \(M\) in real-world deployments.

\bibliographystyle{IEEEtran}
\bibliography{ref}


\end{document}